%% file: main.tex
\documentclass[11pt]{article}

\usepackage[letterpaper,margin=1in]{geometry}
\usepackage[T1]{fontenc}
\usepackage{lmodern}
\usepackage{amsmath,amssymb,amsthm,mathtools,bm}
\usepackage{booktabs}
\usepackage{graphicx}
\usepackage{subcaption}
\usepackage{algorithm}
\usepackage{algorithmic}
\usepackage{array}
\usepackage{multirow}
\usepackage{xcolor}
\usepackage{microtype}
\usepackage[numbers,sort&compress]{natbib}
\usepackage[hidelinks]{hyperref}
\usepackage[nameinlink,capitalize,noabbrev]{cleveref}
\usepackage{url}
\hypersetup{
  pdftitle={Matched Queries for Curvature and Density at Branching Junctions},
  pdfauthor={Ziqi Zhao and Qingjian Ni}
}

\input{macros}
\input{tables/results_macros}
\input{tables/general_end_to_end_macros}

\title{Matched Queries for Curvature and Density at Branching Junctions}
\author{
  Ziqi Zhao\\
  School of Computer Science and Engineering\\
  Southeast University\\
  Nanjing, China\\
  \texttt{ziqizhao@seu.edu.cn}
  \and
  Qingjian Ni\thanks{Corresponding author.}\\
  School of Computer Science and Engineering\\
  Southeast University\\
  Nanjing, China\\
  \texttt{nqj@seu.edu.cn}
}
\date{}

\begin{document}
\maketitle

\input{sections/00_abstract}
\input{sections/01_introduction}
\input{sections/02_model_expansion}
\input{sections/03_identifiability_queries}
\input{sections/04_center_stability}
\input{sections/05_experiments}
\input{sections/06_related_limitations}

\bibliographystyle{plainnat}
\bibliography{refs}

\clearpage
\appendix
\input{appendices/A_expansion}
\input{appendices/B_identifiability}
\input{appendices/C_query_complexity}
\input{appendices/D_center}
\input{appendices/E_stability_kde}
\input{appendices/F_experimental_details}

\end{document}

%% file: macros.tex
\newtheorem{theorem}{Theorem}[section]
\newtheorem{proposition}[theorem]{Proposition}
\newtheorem{lemma}[theorem]{Lemma}
\newtheorem{corollary}[theorem]{Corollary}
\newtheorem{assumption}[theorem]{Assumption}
\theoremstyle{definition}
\newtheorem{definition}[theorem]{Definition}
\theoremstyle{remark}

\newcommand{\R}{\mathbb{R}}
\newcommand{\Sph}{\mathbb{S}}
\newcommand{\E}{\mathbb{E}}

\newcommand{\cD}{\mathcal{D}}
\newcommand{\cJ}{\mathcal{J}}

\newcommand{\cQ}{\mathcal{Q}}

\newcommand{\ip}[2]{\left\langle #1,#2\right\rangle}
\newcommand{\norm}[1]{\left\lVert #1\right\rVert}
\newcommand{\abs}[1]{\left\lvert #1\right\rvert}
\newcommand{\op}{\mathrm{op}}
\newcommand{\supp}{\mathrm{supp}}

\newcommand{\Span}{\mathrm{span}}
\newcommand{\Id}{I}
\newcommand{\dd}{\,\mathrm{d}}

\newcommand{\trans}{\mathsf{T}}

\DeclareMathOperator{\Cov}{Cov}
\DeclareMathOperator{\ess}{ESS}

%% file: tables/results_macros.tex
\newcommand{\ResultLabel}{paper}

\newcommand{\PopulationMedianSlope}{0.998}
\newcommand{\PopulationSlopeQfive}{0.957}
\newcommand{\PopulationSlopeQninetyfive}{1.03}
\newcommand{\PopulationMedianRtwo}{1}
\newcommand{\PopulationInstances}{180}

\newcommand{\QueryDOptMedianRelativeError}{0.00552}

\newcommand{\QueryRandomDOptErrorRatio}{2.89}

\newcommand{\KDEEndpointReductionMinPercent}{52.1}
\newcommand{\KDEEndpointReductionMaxPercent}{66}

\newcommand{\KDEWithinPointFourAgreementPercent}{68.8}
\newcommand{\EndToEndKDEBlindCountPercent}{98.8}
\newcommand{\EndToEndPopulationBlindCountPercent}{100}

\newcommand{\LearnedJobCount}{20}
\newcommand{\LearnedBestMedianStep}{5000}
\newcommand{\LearnedModelParameters}{135042}
\newcommand{\LearnedTrainingSampleMin}{7544}
\newcommand{\LearnedTrainingSampleMax}{8110}

\newcommand{\LearnedGPUJobHours}{1.33}
\newcommand{\LearnedMedianJobSeconds}{241}
\newcommand{\LearnedElapsedSeconds}{\ensuremath{2.47\times 10^{3}}}
\newcommand{\StabilityNearCollisionError}{0.698}
\newcommand{\StabilityWellSeparatedError}{0.00379}
\newcommand{\StabilityErrorRatio}{184}
\newcommand{\StabilityNearCollisionCondition}{\ensuremath{5.61\times 10^{4}}}
\newcommand{\StabilityWellSeparatedCondition}{24.6}

\newcommand{\ResponsePopulationMildRatio}{7.75}
\newcommand{\ResponsePopulationModerateRatio}{26.6}
\newcommand{\ResponsePopulationStrongRatio}{49.4}
\newcommand{\ResponseKDEStrongRatio}{1.96}
\newcommand{\ResponsePopulationSmallSigmaStrongRatio}{143}
\newcommand{\ScaleLargestDimension}{20}
\newcommand{\ScaleLargestBranches}{16}
\newcommand{\ScaleLargestParameters}{340}
\newcommand{\ScaleLargestQueries}{680}

\newcommand{\ScaleLargestExactError}{0.0825}
\newcommand{\ScaleLargestStrongError}{85.6}

\newcommand{\DiagnosticQueryRowRatio}{1}
\newcommand{\DiagnosticPopulationEstimatedExactBasisRatio}{0.611}
\newcommand{\LearnedScaleupJobs}{6}
\newcommand{\LearnedScaleupWidth}{256}
\newcommand{\LearnedScaleupBlocks}{6}
\newcommand{\LearnedScaleupSteps}{50000}
\newcommand{\LearnedScaleupBestStep}{40000}
\newcommand{\LearnedScaleupBestParameterError}{0.511}
\newcommand{\LearnedScaleupFinalParameterError}{0.676}

\newcommand{\LearnedScaleupFinalScoreRMS}{0.0143}

\newcommand{\LearnedScaleupMedianJobSeconds}{736}

%% file: tables/general_end_to_end_macros.tex
\newcommand{\GeneralEtoEEstimatedCases}{270}
\newcommand{\GeneralEtoEPopulationCases}{135}

\newcommand{\GeneralEtoEMaxBranches}{6}
\newcommand{\GeneralEtoEPopulationAngle}{0.0197}

\newcommand{\GeneralEtoEPopulationError}{0.132}

\newcommand{\GeneralEtoEKDEAngle}{0.0222}
\newcommand{\GeneralEtoEKDEError}{0.957}

\newcommand{\GeneralEtoELargestSigma}{0.08}
\newcommand{\GeneralEtoEKDELargestSigmaMin}{0.688}
\newcommand{\GeneralEtoEKDELargestSigmaMax}{0.810}

\newcommand{\GeneralEtoESampleSize}{131{,}072}

%% file: sections/00_abstract.tex
\begin{abstract}
At a junction, a score field can reveal weighted tangent rays, but these rays do not determine how branches bend or how density changes away from the center. This missing information matters when local geometry is used to describe continuation beyond a single point. Small-noise expansions place curvature and density in the first correction, but do not establish whether finite observations can separate the branchwise contributions when the center is estimated. We study this inverse problem through matched queries at scales $\sigma$ and $\lambda\sigma$. For a finite union of $C^{2,\alpha}$ half-branches in $\R^D$, the normalized score satisfies $F_\sigma=F_0+\sigma G+O(\sigma^{1+\alpha})$. Matched subtraction cancels the tangent term and exposes $G$, which is linear in branchwise curvature and log-density slope. Given tangent directions and weights, $G$ uniquely identifies all $sD$ branch parameters on distinct rays; a geometry-dependent condition number governs stability as rays approach one another. The $sD$ fixed component observations are also necessary. This is a scalar-information count rather than a count of vector-valued score-network evaluations. Under coarse-localization and full-rank calibration conditions, an $O(\sigma^2)$ center error adds $D$ translation modes, giving $(s+1)D$ scalar observations except for a translation-invariant full line. We derive a perturbation bound and a conditional kernel-density-estimation (KDE) rate. Experiments recover the predicted population and $N^{-1/5}$ trends and remain full rank up to $D=20$ with 16 supplied branches. A known-count frontend fitted directly to finite-noise scores composes with the inverse in $D=3$--$5$: all \GeneralEtoEPopulationCases{} population systems are full rank, with median relative jet error \GeneralEtoEPopulationError{}. Under strong population first-order error, matched responses reduce median parameter error by \ResponsePopulationStrongRatio{} times relative to naive tangent subtraction.
\end{abstract}

%% file: sections/01_introduction.tex
\section{Introduction}

Score fields are learned by score matching \citep{hyvarinen2005score}, drive noise-conditional generative models \citep{song2019generative,song2021score}, and serve as local measurements of data geometry. Near a smooth manifold, they reveal tangent spaces, intrinsic dimension, and other first-order structure \citep{stanczuk2024dimension,ventura2025manifolds,li2026rates}. A junction, however, has no single tangent space. Its first-order description is a set of outgoing directions and relative masses. For local geometric recovery, that description is incomplete: it cannot tell how each branch continues away from the junction or how mass changes along it.

At leading order, a junction is therefore represented only by weighted tangent rays. The two neighborhoods in \cref{fig:problem-intuition} have the same such representation, yet bend differently and carry different outward density trends. Rescaling toward the junction removes these differences and leaves the same leading score field $F_0$. The missing quantities survive only in the next-order term. Recovering them requires isolating a weak cross-scale signal, assigning a superposition of responses to individual branches, and controlling center error at the same asymptotic order.

\begin{figure}[t]
    \centering
    \includegraphics[width=0.88\linewidth]{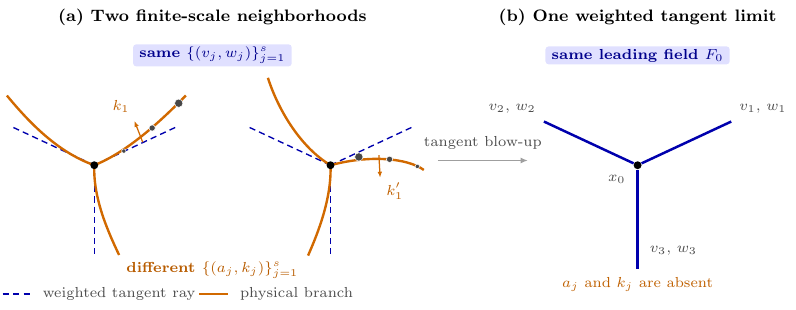}
    \caption{The same tangent geometry can hide different local structure. Tangent blow-up keeps branch directions and weights but removes curvature and outward density change.}
    \label{fig:problem-intuition}
\end{figure}

Variation across noise scales offers a route to this hidden structure. Forward expansions near boundaries and corners show how curvature and density derivatives enter the first small-noise correction \citep{brosse2026boundary}. They map known geometry to its correction, but do not establish whether a superposed correction uniquely determines every branchwise coefficient from finite observations under center error. This inverse question matters because tangent geometry cannot test whether a score representation preserves how each branch continues through a junction.

The leading tangent field can be canceled rather than estimated. Consider a finite junction of one-dimensional branches in $\R^D$. Branch $j$ has direction $v_j$, weight $w_j$, curvature vector $k_j\perp v_j$, and logarithmic density slope $a_j$; we call $(a_j,k_j)$ its \emph{branch jet}. The normalized score obeys
\begin{equation}
F_\sigma(z)=F_0(z)+\sigma G(z)+O(\sigma^{1+\alpha}),
\label{eq:intro-expansion}
\end{equation}
where $F_0$ depends only on $\{v_j,w_j\}$ and $G$ is linear in the branch jets. Curvature displaces mass normally to a tangent ray, whereas density slope changes mass along it. Gaussian smoothing preserves these distinct signatures inside $G$.

Querying the same normalized location $z$ at scales $\sigma$ and $\lambda\sigma$ exposes $G$ directly:
\begin{equation}
\cD_{\sigma,\lambda}(z)
=\frac{\lambda\sigma s_{\lambda\sigma}(x_0+\lambda\sigma z)
-\sigma s_\sigma(x_0+\sigma z)}{(\lambda-1)\sigma}
=G(z)+O(\sigma^\alpha).
\label{eq:intro-difference}
\end{equation}
The common field $F_0$ cancels. With known directions and weights, scalar component evaluations become rows of a linear inverse problem. The superposed field determines every branch jet without an angular-separation assumption, and $sD$ scalar observations are necessary and sufficient. This is an information count, not a forward-pass count: a score network usually returns all $D$ coordinates at one location. Quantitative stability is governed by the condition number, which captures ray collisions and vanishing weights.

The estimated center creates a second difficulty. Given an $O(\sigma)$ coarse localization and a full-rank weak calibration system, first-order score calibration gives $c_\sigma=x_0+\sigma^2b+O(\sigma^{2+\alpha})$. This physical error is small, but normalization promotes it to the same order as $G$. Reusing the provisional center at both scales gives
\begin{equation}
\cD^{c_\sigma}_{\sigma,\lambda}(z)
=G(z)-\lambda^{-1}\nabla F_0(z)b+O(\sigma^\alpha).
\label{eq:intro-center}
\end{equation}
Thus $b\in\R^D$ contributes $D$ known translation-shaped basis fields. Except when the tangent measure is invariant along a complete line, the branch jets and center bias remain jointly identifiable from $(s+1)D$ scalar observations, and estimating $b$ refines the center to $O(\sigma^{2+\alpha})$.

\begin{figure}[t]
    \centering
    \includegraphics[width=0.80\linewidth]{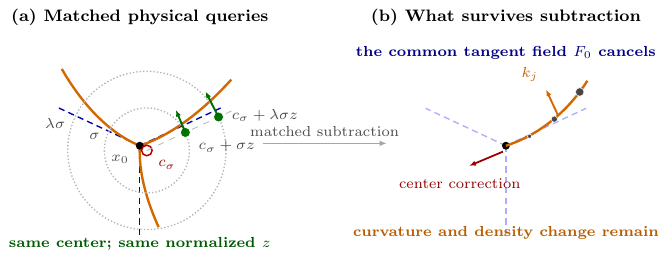}
    \caption{Matched physical queries use the same provisional center and normalized location. Their subtraction removes the shared tangent field, leaving curvature, density change, and the center correction visible.}
    \label{fig:solution-intuition}
\end{figure}

\paragraph{Contributions.}
\begin{enumerate}
    \item The superposed first correction exactly identifies every branch jet on supplied distinct rays without a fixed angular separation; quantitative stability is treated separately.
    \item A locally uniform two-term expansion and matched scale difference make the correction observable by canceling the tangent field rather than estimating and subtracting it.
    \item Fixed continuous schemes have scalar-information complexity $sD$. An augmented inverse jointly identifies the $O(\sigma^2)$ center bias and branch jets with complexity $(s+1)D$, except for the intrinsic full-line ambiguity.
    \item A perturbation bound separates finite-scale, score, center, and first-stage errors and yields a conditional KDE rate.
\end{enumerate}

\paragraph{Evidence.}
Across \PopulationInstances{} population fits in $D\in\{2,3,5\}$, the median remainder exponent is \PopulationMedianSlope{}, matching the predicted $O(\sigma)$ order. Controlled-perturbation scale-up remains full rank through $D=\ScaleLargestDimension{}$ and \ScaleLargestBranches{} supplied branches. The phase map shows a \StabilityErrorRatio{}-fold error contrast between near-collision and well-separated corners, and matched responses are \ResponsePopulationStrongRatio{} times more accurate than naive subtraction under strong population first-order error.

A known-count frontend fitted directly to finite-noise scores runs end to end in $D=3$--$5$. All \GeneralEtoEPopulationCases{} population systems are full rank, with median relative jet error \GeneralEtoEPopulationError{}. KDE scores retain a median maximum tangent error of \GeneralEtoEKDEAngle{} radians and provide measurable jet recovery with median relative error \GeneralEtoEKDEError{} under a fixed \GeneralEtoESampleSize{}-sample budget; at $\sigma=\GeneralEtoELargestSigma{}$, the dimensionwise medians improve to \GeneralEtoEKDELargestSigmaMin{}--\GeneralEtoEKDELargestSigmaMax{}. For learned scores, median jet error reaches its minimum before score RMS. Together, these experiments test finite observations, imperfect first-order geometry, sampling, and learned scores.

%% file: sections/02_model_expansion.tex
\section{Model and Second-Order Target}
\label{sec:model}

First-order geometry consists of tangent directions and relative masses; the second-order target is curvature and density change. Let $x_0\in\R^D$ be the junction center. In a neighborhood of $x_0$, assume
\begin{equation}
\mu_{\mathrm{loc}}
=\sum_{j=1}^s (\gamma_j)_\#\bigl(\rho_j(r)\dd r\bigr),
\qquad \gamma_j:[0,r_0]\to\R^D,
\label{eq:measure-model}
\end{equation}
plus a finite remainder $\mu_{\mathrm{far}}$ supported away from $x_0$. The curves are parameterized by arc length and satisfy
\begin{align}
\gamma_j(0)&=x_0,
&\gamma'_j(0)&=v_j\in\Sph^{D-1},
&\gamma''_j(0)&=k_j\in v_j^\perp,
\label{eq:curve-jet}\\
\rho_j(0)&=w_j>0,
&a_j&=\partial_r\log\rho_j(0).
\label{eq:density-jet}
\end{align}
Distinct branches have distinct $v_j$. Arc-length parameterization implies $k_j\perp v_j$.

\begin{assumption}[Uniform jet remainder]
\label{ass:jet}
For some $0<\alpha\le 1$ and finite constants $L_\gamma,L_\rho$,
\begin{align}
\norm{\gamma_j(r)-x_0-rv_j-\tfrac12r^2k_j}
&\le L_\gamma r^{2+\alpha},
\label{eq:curve-rem}\\
\abs{\rho_j(r)-w_j(1+a_jr)}
&\le L_\rho r^{1+\alpha}
\label{eq:density-rem}
\end{align}
for $0\le r\le r_0$, and
$\operatorname{dist}(x_0,\supp\mu_{\mathrm{far}})\ge\Delta>0$.
\end{assumption}

The weighted tangent measure is
\begin{equation}
\nu_0=\sum_{j=1}^s w_j\int_0^\infty \delta_{uv_j}\dd u.
\label{eq:tangent-measure}
\end{equation}
A score cannot see its total scale, so the weights may be normalized to sum to one. The known first-order geometry is $\{v_j,w_j\}_{j=1}^s$; the target is
\begin{equation}
\cJ_2(\mu,x_0)=\{(a_j,k_j):j=1,\ldots,s\}.
\label{eq:jet-target}
\end{equation}
Here $k_j$ measures bending and $a_j$ measures outward log-density change.

Let $p_\sigma=\mu*\varphi_\sigma$, where $\varphi_\sigma$ is the centered Gaussian density with covariance $\sigma^2\Id_D$, and define
\begin{equation}
s_\sigma(x)=\nabla_x\log p_\sigma(x),
\qquad
F_\sigma(z)=\sigma s_\sigma(x_0+\sigma z).
\label{eq:renormalized-score}
\end{equation}
Write the repeated one-dimensional Gaussian integrals as
\begin{equation}
J_{j,m}(z)=\int_0^\infty u^m
\exp\!\left(-\frac{\norm{z-uv_j}^2}{2}\right)\dd u.
\label{eq:Jjm}
\end{equation}
The leading scalar transform and its first correction are
\begin{align}
Q_0(z)
&=\sum_{j=1}^s w_jJ_{j,0}(z),
\label{eq:Q0}\\
Q_1(z)
&=\sum_{j=1}^s w_j\left[
    a_jJ_{j,1}(z)
    +\frac12\ip{k_j}{z}J_{j,2}(z)
\right].
\label{eq:Q1}
\end{align}

\section{Why Second-Order Geometry Appears Across Scale}
\label{sec:expansion}

The mechanism is visible before the formal expansion. At physical radius $r=\sigma u$, both the density change $a_jr$ and the normal displacement $r^2k_j/2$, after division by the observation scale $\sigma$, contribute at order $\sigma$. Gaussian smoothing therefore places both effects in the same correction $G$.

\begin{figure}[t]
    \centering
    \includegraphics[width=0.70\linewidth]{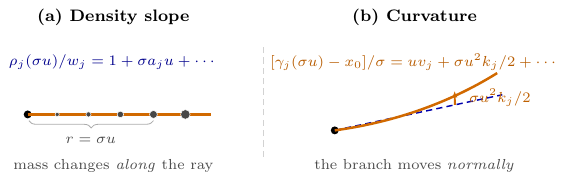}
    \caption{Two distinct order-$\sigma$ signatures. Density slope changes mass along a tangent ray, whereas curvature moves the branch normally away from it.}
    \label{fig:order-sigma-mechanism}
\end{figure}

\begin{theorem}[Second-order junction expansion]
\label{thm:expansion}
Under \cref{ass:jet}, for every compact $\cQ\subset\R^D$ and every fixed integer $m\ge 0$, there are $C_{m,\cQ}<\infty$ and $\sigma_0>0$ such that
\begin{equation}
\norm{\widetilde Q_\sigma-Q_0-\sigma Q_1}_{C^m(\cQ)}
\le C_{m,\cQ}\sigma^{1+\alpha},
\qquad 0<\sigma\le\sigma_0,
\label{eq:density-expansion}
\end{equation}
where $\widetilde Q_\sigma$ is the locally rescaled Gaussian convolution after removal of its $z$-independent normalization. Consequently, for $m\ge1$,
\begin{equation}
\norm{F_\sigma-F_0-\sigma G}_{C^{m-1}(\cQ)}
\le C'_{m,\cQ}\sigma^{1+\alpha},
\label{eq:score-expansion}
\end{equation}
with
\begin{equation}
F_0=\nabla\log Q_0,
\qquad
G=\nabla\left(\frac{Q_1}{Q_0}\right).
\label{eq:F0G}
\end{equation}
\end{theorem}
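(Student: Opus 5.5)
I would start by writing the Gaussian convolution explicitly at $x=x_0+\sigma z$ and splitting it into a local and a far part:
\[
p_\sigma(x_0+\sigma z)=(2\pi\sigma^2)^{-D/2}\Bigl[\sum_{j}\int_0^{r_0}\rho_j(r)e^{-\norm{\sigma z-(\gamma_j(r)-x_0)}^2/(2\sigma^2)}\dd r+\int e^{-\norm{x-y}^2/(2\sigma^2)}\dd\mu_{\mathrm{far}}(y)\Bigr].
\]
Substituting $r=\sigma u$ in each branch integral pulls out a factor $\sigma$. Removing the $z$-independent factor $(2\pi\sigma^2)^{-D/2}\sigma$ defines
\[
\widetilde Q_\sigma(z)=\sum_j\int_0^{r_0/\sigma}\rho_j(\sigma u)\exp\!\Bigl(-\tfrac12\norm{z-u v_j-\sigma\tfrac12u^2k_j-\sigma^{-1}R_j(\sigma u)}^2\Bigr)\dd u+\text{(far term)},
\]
where $R_j(r)=\gamma_j(r)-x_0-rv_j-\tfrac12r^2k_j$ satisfies $\norm{R_j(r)}\le L_\gamma r^{2+\alpha}$, so $\sigma^{-1}\norm{R_j(\sigma u)}\le L_\gamma\sigma^{1+\alpha}u^{2+\alpha}$.

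The far term, the truncation of the $u$-integral at $r_0/\sigma$, and all their $z$-derivatives on compact $\cQ$ are $O(e^{-c/\sigma^2})$. This uses $\operatorname{dist}(x_0,\supp\mu_{\mathrm{far}})\ge\Delta$ and the Gaussian decay in $u$, since $\norm{z-uv_j+\dots}\gtrsim u$ once $u$ is large relative to $\sup_{\cQ}\norm{z}$. These contributions can therefore be absorbed into the error.

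For the main term, I would Taylor expand the integrand in $\sigma$. Write $\rho_j(\sigma u)=w_j(1+a_j\sigma u)+O(\sigma^{1+\alpha}u^{1+\alpha})$. For the exponent, set $h=\sigma\tfrac12u^2k_j+\sigma^{-1}R_j(\sigma u)$. Then
\[
-\tfrac12\norm{z-uv_j-h}^2=-\tfrac12\norm{z-uv_j}^2+\ip{z-uv_j}{h}-\tfrac12\norm h^2,
\]
and $\ip{uv_j}{k_j}=0$ because $k_j\perp v_j$, so the linear part contributes $\tfrac{\sigma}{2}u^2\ip{k_j}{z}$. Using $|e^t-1-t|\le t^2e^{|t|}$ and the bounds on $R_j$, the exponential factor equals
\[
e^{-\norm{z-uv_j}^2/2}\bigl(1+\tfrac\sigma2u^2\ip{k_j}{z}+E\bigr),\qquad |E|\lesssim\sigma^{1+\alpha}\,\mathrm{poly}(u,\norm z)\,e^{C\sigma u^2(1+\norm z)}.
\]
The key technical point is that the growth factor $e^{C\sigma u^2}$ must be dominated by the Gaussian $e^{-u^2/2}$. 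I would handle this by splitting the integral at $u\le\sigma^{-1/2}$, or at $u\le\delta/\sigma$ with $\delta$ small. On the inner piece, $\sigma u^2(1+\norm z)$ stays below $u^2/4$ once $\sigma\le\sigma_0(\cQ)$. On the outer piece, everything is exponentially small in $1/\sigma$, using only $\norm{h}\le C\sigma u^2\le C\delta u$ to keep a Gaussian lower bound on the exponent. Multiplying the two expansions and integrating then gives $Q_0+\sigma Q_1$ plus an error bounded by $\sigma^{1+\alpha}\int \mathrm{poly}(u)e^{-u^2/8}\dd u$, with the $\sigma^2$ cross terms absorbed because $\alpha\le1$.

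For the $C^m(\cQ)$ bound, note that $z$ enters only through the Gaussian factor and polynomially, and the curve and density data do not depend on $z$. I would therefore differentiate under the integral: each $\partial_z^\beta$ produces polynomial factors in $(z-uv_j-h)$, and I would repeat the same expansion with these polynomial weights. This is the step I expect to be the main obstacle, namely keeping uniform control of the remainder through $m$ derivatives on the region where $u\sim1/\sigma$. The splitting above resolves it.

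For the score statement, $F_\sigma(z)=\sigma s_\sigma(x_0+\sigma z)=\nabla_z\log p_\sigma(x_0+\sigma z)=\nabla\log\widetilde Q_\sigma(z)$, since the removed normalization is $z$-independent. Here $Q_0>0$ is continuous, so $Q_0\ge q_{\cQ}>0$ on $\cQ$, and for small $\sigma$ the same lower bound holds for $\widetilde Q_\sigma$ up to a factor of $2$. Expanding
\[
\log\widetilde Q_\sigma=\log Q_0+\log\Bigl(1+\sigma\tfrac{Q_1}{Q_0}+O_{C^m}(\sigma^{1+\alpha})\Bigr)=\log Q_0+\sigma\tfrac{Q_1}{Q_0}+O_{C^m}(\sigma^{1+\alpha}),
\]
where the quadratic term $O(\sigma^2)$ is absorbed since $\alpha\le1$. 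Composition with the smooth function $\log$ on a region bounded away from $0$ preserves $C^m$ estimates. Taking one gradient then yields the $C^{m-1}$ bound on $F_\sigma-F_0-\sigma G$, with $F_0=\nabla\log Q_0$ and $G=\nabla(Q_1/Q_0)$.
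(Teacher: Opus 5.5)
Your proposal is correct and follows the paper's argument essentially step for step: rescale $r=\sigma u$, expand $\rho_j(\sigma u)$ and the kernel to first order (using $k_j\perp v_j$ to get $\tfrac{\sigma}{2}u^2\ip{k_j}{z}$), dominate all remainders and their $z$-derivatives by a polynomial Gaussian envelope, discard the exponentially small far and tail terms, and compose with $\log$ using $Q_0>0$. Your expansion of the exponent with $\abs{e^t-1-t}\le t^2e^{\abs t}$ plays the role of the paper's Taylor expansion of $K_z$ in $q$ along segments with $\norm{\delta_{j,\sigma}(u)}\le u/2$.

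One minor repair is needed. The exponent $\abs t$ also contains terms of size $\sigma^{1+\alpha}u^{3+\alpha}$ and $\sigma^2u^4$, not only $C\sigma u^2(1+\norm z)$. Also, $\norm h\le C\delta u$ holds only for $u\le\delta/\sigma$, not on your outer piece. So shrink $r_0$ until $\sigma u\le r_0$ is small throughout, exactly as the paper does; your Gaussian-domination argument then goes through unchanged.
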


$F_0$ records the tangent rays, whereas $G$ superposes all branch jets. The expansion is forward; \cref{sec:inverse} proves that the superposition is uniquely invertible.

The proof substitutes $r=\sigma u$ into the curve and density expansions, expands the Gaussian kernel once, and integrates under a uniform Gaussian envelope. The far-away measure is exponentially small, and positivity of $Q_0$ permits the logarithm. \Cref{app:expansion} gives the complete derivative and remainder argument.

For a fixed scale ratio $\lambda>1$, define
\begin{equation}
\cD_{\sigma,\lambda}(z)
=\frac{F_{\lambda\sigma}(z)-F_\sigma(z)}{(\lambda-1)\sigma}.
\label{eq:two-scale-def}
\end{equation}
The two scores are evaluated at $x_0+\lambda\sigma z$ and $x_0+\sigma z$, respectively, so they share the same normalized location $z$.

\begin{corollary}[Renormalized scale derivative]
\label{cor:scale-difference}
For fixed $1<\lambda\le\lambda_{\max}$ and compact $\cQ$,
\begin{equation}
\sup_{z\in\cQ}\norm{\cD_{\sigma,\lambda}(z)-G(z)}
\le C_{\cQ,\lambda}\sigma^\alpha.
\label{eq:two-scale-rate}
\end{equation}
Equivalently,
$G(z)=\partial_\sigma[\sigma s_\sigma(x_0+\sigma z)]|_{\sigma=0^+}$.
\end{corollary}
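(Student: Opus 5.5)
The plan is to apply \cref{thm:expansion} at the two scales $\sigma$ and $\lambda\sigma$ on the same compact set $\cQ$, and then subtract. Take $m=1$, so the score expansion \eqref{eq:score-expansion} is a $C^0(\cQ)$ (sup-norm) bound. Let $\sigma_0$ be the threshold from the theorem, and restrict to $0<\sigma\le\sigma_0/\lambda_{\max}$. Then both $\sigma$ and $\lambda\sigma$ lie in $(0,\sigma_0]$, and we may write
\[
F_\sigma=F_0+\sigma G+R_\sigma,
\qquad
F_{\lambda\sigma}=F_0+\lambda\sigma G+R_{\lambda\sigma},
\]
with $\sup_{\cQ}\norm{R_\tau}\le C'_{1,\cQ}\tau^{1+\alpha}$ for every $\tau\le\sigma_0$. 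It is important that the theorem gives a single constant for all such $\tau$ on the fixed compact set; this uniformity in the scale is exactly what lets us use the same $\cQ$ at both scales.

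Subtracting the two expansions cancels $F_0$ exactly, because $F_0$ is scale-free. The $G$ terms leave $(\lambda-1)\sigma G$, so dividing by $(\lambda-1)\sigma$ gives
\[
\cD_{\sigma,\lambda}-G=\frac{R_{\lambda\sigma}-R_\sigma}{(\lambda-1)\sigma}.
\]
Taking the supremum over $\cQ$ then yields
\[
\sup_{\cQ}\norm{\cD_{\sigma,\lambda}-G}\le C'_{1,\cQ}\,\frac{(\lambda^{1+\alpha}+1)\,\sigma^{\alpha}}{\lambda-1}.
\]
This proves \eqref{eq:two-scale-rate} with $C_{\cQ,\lambda}=C'_{1,\cQ}(\lambda_{\max}^{1+\alpha}+1)/(\lambda-1)$. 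For larger $\sigma$, up to any fixed bound, the estimate follows trivially by enlarging the constant: $F_\sigma$ is continuous in $\sigma>0$ and bounded on $\cQ$ over compact ranges of $\sigma$. Note that the constant blows up as $\lambda\downarrow1$. The statement only claims a bound for each fixed $\lambda>1$, so this is consistent with the claim.

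For the equivalent derivative formulation, put $\Phi(\sigma):=F_\sigma(z)=\sigma s_\sigma(x_0+\sigma z)$ and extend it by $\Phi(0)=F_0(z)$. The expansion with $\lambda$ replaced by $0$ (that is, the theorem applied at the single scale $\sigma$) gives
\[
\frac{\Phi(\sigma)-\Phi(0)}{\sigma}-G(z)=O(\sigma^\alpha)\to0.
\]
This is exactly the statement that the one-sided derivative $\partial_\sigma\Phi|_{\sigma=0^+}$ exists and equals $G(z)$, uniformly in $z\in\cQ$.

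The only delicate point I expect is confirming that the remainder constant in \eqref{eq:score-expansion} is indeed uniform over all $\tau\le\sigma_0$ on the fixed compact $\cQ$. Reading \cref{thm:expansion} as stated, it is, so no rescaling of $\cQ$ is needed: both queries use the same normalized $z$. Everything else is algebra.
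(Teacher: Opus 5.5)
Your proposal is correct and matches the argument the paper leaves implicit (it gives no separate proof). You apply the score expansion of \cref{thm:expansion} with $m=1$ at both $\sigma$ and $\lambda\sigma$, taking $\sigma\le\sigma_0/\lambda_{\max}$ so both scales are admissible, and the two $O(\sigma^{1+\alpha})$ remainders divided by $(\lambda-1)\sigma$ give the $\sigma^\alpha$ rate, while the single-scale expansion gives the one-sided derivative form.
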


Thus matched subtraction removes the common tangent field $F_0$ and exposes $G$ without differentiating a noisy score.

If an exact tangent score were available, $(F_\sigma-F_0)/\sigma$ would have the same limit. The matched difference is preferable in practice because it cancels the leading field present in the observations themselves.

%% file: sections/03_identifiability_queries.tex
\section{Recovering Branch Jets}
\label{sec:inverse}

With the two-scale response in hand, two questions remain: does it uniquely determine the branch jets, and how many scalar observations are needed? Both become finite-dimensional once the directions and weights are known.

Choose an orthonormal basis
$N_j=[n_{j,1},\ldots,n_{j,D-1}]$ for $v_j^\perp$ and write
$k_j=N_j\kappa_j$, with $\kappa_j\in\R^{D-1}$. Define
\begin{align}
A_j(z)
&=\nabla\left(\frac{w_jJ_{j,1}(z)}{Q_0(z)}\right),
\label{eq:Afield}\\
K_{j,r}(z)
&=\nabla\left(
\frac{w_j\ip{n_{j,r}}{z}J_{j,2}(z)}{2Q_0(z)}
\right),
\quad r=1,\ldots,D-1.
\label{eq:Kfield}
\end{align}
Then
\begin{equation}
G(z)=\sum_{j=1}^s a_jA_j(z)
+\sum_{j=1}^s\sum_{r=1}^{D-1}\kappa_{j,r}K_{j,r}(z).
\label{eq:linear-G}
\end{equation}
Thus each density slope and curvature coordinate multiplies a known response pattern.

\begin{theorem}[Jet identifiability]
\label{thm:jet-identifiability}
Fix distinct directions $v_1,\ldots,v_s\in\Sph^{D-1}$ and positive weights $w_1,\ldots,w_s$. The map
\begin{equation}
\{(a_j,k_j)\}_{j=1}^s\longmapsto G
\label{eq:jet-map}
\end{equation}
is injective over $a_j\in\R$ and $k_j\in v_j^\perp$.
\end{theorem}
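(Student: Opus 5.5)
Since the map \eqref{eq:jet-map} is linear in $(a_j,k_j)$ by \eqref{eq:linear-G}, it suffices to show that its kernel is trivial. So suppose $G\equiv 0$ on $\R^D$. Because $Q_0>0$ everywhere, $Q_1/Q_0$ is smooth, and $\R^D$ is connected, the vanishing of $G=\nabla(Q_1/Q_0)$ gives $Q_1=cQ_0$ for some constant $c$. By \eqref{eq:Q0}--\eqref{eq:Q1} this reads
\begin{equation*}
\sum_{j=1}^s w_j\Bigl[a_jJ_{j,1}(z)+\tfrac12\ip{k_j}{z}J_{j,2}(z)-cJ_{j,0}(z)\Bigr]=0
\qquad\text{for all } z\in\R^D .
\end{equation*}
The plan is to test this identity far out along each ray and isolate one branch at a time.

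Fix a branch $i$ and take $z=tv_i+y$ with $y\perp v_i$ fixed and $t\to\infty$. The key formula comes from $\norm{z-uv_i}^2=(u-t)^2+\norm{y}^2$, which gives
\begin{equation*}
J_{i,m}(tv_i+y)=e^{-\norm{y}^2/2}\int_0^\infty u^m e^{-(u-t)^2/2}\dd u .
\end{equation*}
Up to an error of order $O(t^m e^{-t^2/2})$, the integral equals the full-line Gaussian moment. That moment is a polynomial in $t$: $\sqrt{2\pi}$, $\sqrt{2\pi}\,t$, and $\sqrt{2\pi}(t^2+1)$ for $m=0,1,2$. Since $k_i\perp v_i$, we also have $\ip{k_i}{z}=\ip{k_i}{y}$. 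Hence branch $i$ contributes
\begin{equation*}
\sqrt{2\pi}\,w_ie^{-\norm{y}^2/2}\Bigl[\tfrac12\ip{k_i}{y}(t^2+1)+a_it-c\Bigr]
\end{equation*}
plus an exponentially small error.

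For $j\ne i$, I will bound $J_{j,m}(z)\le C_m(1+\norm{z})^{m}\exp(-\operatorname{dist}(z,\R_{\ge0}v_j)^2/2)$, possibly after adjusting constants. Distinct unit vectors give $\operatorname{dist}(tv_i,\R_{\ge0}v_j)\ge\beta_{ij}t$ with $\beta_{ij}>0$. This holds even when $v_j=-v_i$, in which case the distance is $t$. So for fixed $y$ the branch-$j$ terms, including the linear factor $\ip{k_j}{z}$, are $O(t^{3}e^{-\beta^2t^2/4})$.

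Dividing the identity by $t^2$ and letting $t\to\infty$ gives $\ip{k_i}{y}=0$ for every $y\perp v_i$, so $k_i=0$. With $k_i=0$, dividing by $t$ gives $a_i=0$, and the remaining constant term gives $c=0$. Repeating this for every $i$ shows that the kernel is trivial, and injectivity follows.

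The main obstacle is making the cross-branch domination rigorous. One needs a clean Gaussian tail bound for $J_{j,m}$ off its own ray that beats the polynomial growth of the branch-$i$ terms. Here it is essential that only distinctness of the $v_j$, and no angular separation, is required, since the argument is asymptotic in $t$ for each fixed configuration. I would first prove the elementary estimate
\begin{equation*}
\norm{z-uv_j}^2\ge \operatorname{dist}(z,\R_{\ge0}v_j)^2/2+\bigl(u-\ip{z}{v_j}_+\bigr)^2/2,
\end{equation*}
or alternatively a cruder lower bound such as $\norm{z-uv_j}^2\ge \delta^2(t^2+u^2)$ for $z=tv_i+y$ with $t$ large. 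Either version allows the $u$-integral to be performed explicitly.
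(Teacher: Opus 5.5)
Your proof is correct, and after the shared first step $Q_1=cQ_0$ it takes a genuinely different route from the paper.

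The paper writes $Q_1=K*\eta$ and $Q_0=K*\nu_0$, with $\eta$ a tempered distribution carried by the rays. Positivity of the Gaussian Fourier multiplier then gives $\eta=c\nu_0$. The paper tests this identity against functions supported in a thin tube around an interior segment of ray $j$. A test vanishing on the ray with arbitrary normal gradient gives $k_j=0$. A test equal to $\chi(u)$ on the ray, with zero normal gradient, gives $a_ju=c$ on an interval, hence $a_j=c=0$.

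You instead stay with the explicit functions and read off the same three quantities from the growth orders $t^2$, $t$, $1$ of the Gaussian moments far out along ray $i$. Your proposed inequality does hold: for $u\ge0$ one has $\norm{z-uv_j}^2\ge\operatorname{dist}(z,\R_{\ge0}v_j)^2+(u-\max\{\ip{z}{v_j},0\})^2$, even without the factors $1/2$. So the other branches are suppressed by a Gaussian factor in $\beta_{ij}t$, and the domination step closes.

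What each route buys:
\begin{itemize}
\item Yours is more elementary: no distributions and no Fourier inversion. It also shows directly why distinctness alone suffices. Isolating a branch requires $\beta_{ij}t$ to be large, which is a qualitative hint of the loss of stability as rays collide.
\item The paper's is more modular. Deconvolution and localization are separate steps, and the localization needs only a short interior segment of each ray rather than the far field. The same distributional identity is also reused for the center-augmented theorem, where the translation term appears as the endpoint atom $(\sum_jw_jb^\trans v_j)\delta_0$.
\end{itemize}

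Your asymptotics would adapt to the center-augmented case as well. There the extra term contributes a multiple of $\ip{b}{y}$ at order $t^0$ along ray $i$, forcing $P_{v_i^\perp}b=0$ and leaving only the residual lineality question.
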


The geometric reason is that density changes act along a ray, while curvature acts normally to it; a localized observation can therefore separate both from every other branch.

The proof represents the correction as a distribution on the tangent rays. Because Gaussian convolution is injective, a zero field implies a zero distribution. A test supported in a narrowing tube isolates one ray, first its normal curvature and then its tangential density change. Repeating this test proves uniqueness for distinct rays, although stability may deteriorate as two rays approach. \Cref{app:identifiability} gives the full argument.

\begin{theorem}[Sharp scalar-information complexity]
\label{thm:sharp-query}
Under the conditions of \cref{thm:jet-identifiability}, there exist $sD$ scalar component evaluations of $G$ whose values uniquely determine all branch jets. Conversely, any fixed continuous observation scheme that identifies every jet vector in a nonempty open subset of $\R^{sD}$ requires at least $sD$ scalar outputs.
\end{theorem}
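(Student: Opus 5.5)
The plan has two halves: an upper bound resting on linearity together with \cref{thm:jet-identifiability}, and a lower bound from a topological dimension argument.

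\emph{Sufficiency.} By \cref{eq:linear-G}, $G$ depends linearly on the parameter vector $\theta=(a_1,\kappa_1,\ldots,a_s,\kappa_s)\in\R^{sD}$. The basis fields $A_j$ and $K_{j,r}$ are real-analytic in $z$: each $J_{j,m}$ is a Gaussian integral analytic in $z$, and $Q_0>0$. Consider the linear map $\Phi:\theta\mapsto G$ into the vector space $V$ of smooth $\R^D$-valued functions on $\R^D$. By \cref{thm:jet-identifiability}, $\Phi$ is injective; the map $\kappa_j\mapsto k_j=N_j\kappa_j$ is itself injective, so the theorem applies in these coordinates. Hence the $sD$ fields $\{A_j,K_{j,r}\}$ are linearly independent in $V$.

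I will then select point evaluations greedily. Each scalar functional $\ell_{z,i}(f)=\ip{e_i}{f(z)}$ is linear on $\Phi(\R^{sD})$. These functionals jointly separate points of the finite-dimensional space $\Phi(\R^{sD})$, since a function all of whose component values vanish is zero. Therefore they span its dual, which has dimension $sD$. A standard linear-algebra step now lets me extract $sD$ functionals $\ell_{z_q,i_q}$ whose restrictions form a basis of that dual. In matrix terms, the $sD\times sD$ matrix $M_{q,p}=\ell_{z_q,i_q}(\text{basis field }p)$ is invertible, and $\theta=M^{-1}y$. Analyticity gives a bonus: $\det M$ is a nonzero analytic function of $(z_1,\ldots,z_{sD})$, so generic query locations (for instance, random ones in any open set) work. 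I will mention this, but it is not needed.

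\emph{Necessity.} Let the scheme be a continuous map $\Psi:U\to\R^m$ on a nonempty open set $U\subset\R^{sD}$, sending jet vectors to the $m$ scalar outputs. Identification of every jet in $U$ means $\Psi$ is injective on $U$. Suppose $m<sD$. Compose with the inclusion $\R^m\hookrightarrow\R^{sD}$ as the first $m$ coordinates. This gives a continuous injection from a closed ball $B\subset U$ into $\R^{sD}$ whose image lies in a proper linear subspace and so has empty interior. Invariance of domain says a continuous injection from an open subset of $\R^{sD}$ into $\R^{sD}$ is open, so the image of the interior of $B$ would be open and nonempty. This is a contradiction, hence $m\ge sD$. (Equivalently, one could cite the Borsuk--Ulam theorem on a small sphere in $U$.)

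I will remark that continuity is essential, because space-filling-type bijections defeat counting. The statement's word \emph{fixed} means the scheme is non-adaptive. I expect the main obstacle to be bookkeeping rather than depth. In sufficiency, the point is to pass carefully from injectivity of \cref{eq:jet-map} to the existence of finitely many component evaluations; the dual-spanning argument handles this. In necessity, the point is to state precisely the class of schemes, namely continuous maps of the jet vector, to which invariance of domain applies.
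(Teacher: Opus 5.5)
Your proposal is correct and follows the paper's proof essentially step for step. For sufficiency, you pass from \cref{thm:jet-identifiability} to linear independence of the $sD$ basis fields, then use that point-coordinate evaluations separate points and hence span the dual; this is exactly \cref{lem:evaluation-basis}. For necessity, you compose the continuous injection with a coordinate embedding and apply invariance of domain, as the paper does; your remark that analyticity makes generic query locations work is an optional extra not used in the paper.
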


Each branch contributes one density slope and $D-1$ curvature coordinates. One scalar observation is one coordinate at one location; because a network evaluation usually returns all $D$ coordinates, $sD$ is a scalar-information count, not a forward-pass count.

Point-coordinate evaluations span the dual of the $sD$-dimensional space generated by \cref{eq:Afield,eq:Kfield}, so $sD$ of them suffice. For necessity, invariance of domain rules out a continuous injection from an open subset of $\R^{sD}$ into fewer coordinates. In computation, we select well-conditioned rows from a larger candidate matrix; \cref{app:query-complexity} gives the formal proof.

\begin{algorithm}[t]
\caption{Two-scale branch-jet reconstruction}
\label{alg:recovery}
\begin{algorithmic}[1]
\REQUIRE First-order estimate $(c_\sigma,\{\hat v_j,\hat w_j\}_{j=1}^{\hat s})$, scales $\sigma$ and $\lambda\sigma$, candidate normalized grid $\mathcal Z$.
\STATE Construct the branch basis fields in \cref{eq:Afield,eq:Kfield}; for center-robust recovery, append \cref{eq:center-basis}.
\STATE Select scalar point-coordinate rows by rank-revealing QR, or retain an overdetermined design.
\STATE At each selected $z$, query the same center $c_\sigma$ at $c_\sigma+\sigma z$ and $c_\sigma+\lambda\sigma z$.
\STATE Form \cref{eq:centered-difference} and solve least squares.
\STATE Return $(\hat a_j,\hat k_j)$ and, when applicable, $\hat x_0=c_\sigma-\sigma^2\hat b$.
\end{algorithmic}
\end{algorithm}

%% file: sections/04_center_stability.tex
\section{Recovering Jets with an Imperfect Center}
\label{sec:center}

The previous section assumed the exact center $x_0$. A score-only first stage instead returns a nearby point. The key question is whether its error hides the order-$\sigma$ branch signal or can be estimated with it.

\begin{proposition}[Center accuracy after local calibration]
\label{prop:calibration-expansion}
Fix smooth localized score tests around a bounded reference offset. If the resulting exact tangent-model least-squares matrix has full column rank, then using the finite-noise score from \cref{thm:expansion} returns a physical center of the form
\begin{equation}
c_\sigma=x_0+\sigma^2b+O(\sigma^{2+\alpha})
\label{eq:calibrated-center}
\end{equation}
for some bounded $b\in\R^D$.
\end{proposition}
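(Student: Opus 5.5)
We set up the calibration as a parametric least-squares problem in a normalized translation variable and apply the implicit function theorem at $\sigma=0$. Let $c_{\mathrm{ref}}=x_0+\sigma\eta$ be the coarse localization, with $\eta$ bounded; this is the "bounded reference offset". Write a candidate center as $c=x_0+\sigma t$ with $t\in\R^D$, and let $\psi_1,\ldots,\psi_M$ be the fixed smooth localized tests, supported in a compact $\cQ$ of normalized coordinates around $\eta$. The calibration residual compares observed scores with the tangent-model field translated to the candidate center:
\begin{equation}
R_i(t;\sigma)=\int \psi_i(z)\cdot\Bigl(\sigma s_\sigma(x_0+\sigma z)-F_0(z-t)\Bigr)\dd z ,
\qquad i=1,\ldots,M.
\end{equation}
The first observed term is $F_\sigma(z)$. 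The returned center is $c_\sigma=x_0+\sigma\hat t_\sigma$, where $\hat t_\sigma$ minimizes $\sum_i R_i(t;\sigma)^2$.

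By \cref{thm:expansion} with $m\ge2$, uniformly on $\cQ$ we have $F_\sigma=F_0+\sigma G+O(\sigma^{1+\alpha})$ in $C^1$. Hence
\begin{equation}
R_i(t;\sigma)=\Phi_i(t)+\sigma g_i+O(\sigma^{1+\alpha}),
\qquad
\Phi_i(t)=\int\psi_i\cdot\bigl(F_0(z)-F_0(z-t)\bigr)\dd z,
\qquad
g_i=\int\psi_i\cdot G .
\end{equation}
We have $\Phi(0)=0$, and the Jacobian $J=D\Phi(0)$ has entries $J_{i\cdot}=\int\psi_i\cdot\nabla F_0(z)\,(\cdot)\dd z$. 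This $J$ is exactly the exact tangent-model least-squares matrix, which is assumed to have full column rank.

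The normal equations $D\Phi(t)^\trans R(t;\sigma)=0$ define a $C^1$ map in $(t,\sigma)$ near $(0,0)$. Its $t$-derivative at $(0,0)$ is $J^\trans J$, which is invertible. The implicit function theorem, applied with the remainder controlled in $C^1$ uniformly in $t$, gives a unique local minimizer. It satisfies
\begin{equation}
\hat t_\sigma=-\sigma (J^\trans J)^{-1}J^\trans g+O(\sigma^{1+\alpha}).
\end{equation}
To obtain the $\alpha$-power remainder instead of merely $o(\sigma)$, we need a quantitative Newton/contraction step. Write $t=\sigma\tau$ and expand $\Phi(\sigma\tau)=\sigma J\tau+O(\sigma^2|\tau|^2)$, using the smoothness of $F_0$, which follows from $Q_0>0$ being smooth. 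Then solve the rescaled equation for $\tau$, which differs from the linear problem by $O(\sigma^\alpha)$. Setting $b=-(J^\trans J)^{-1}J^\trans g$, which is bounded because $G$ is bounded on $\cQ$, we get $c_\sigma=x_0+\sigma\hat t_\sigma=x_0+\sigma^2b+O(\sigma^{2+\alpha})$.

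The main obstacle is showing that the minimizer the procedure actually returns is this local one near $t=0$, rather than a spurious minimizer elsewhere. Since the coarse localization is only $O(\sigma)$, $t$ starts at $\eta$, which is not small. The plan is therefore to restrict the search to a bounded normalized ball containing $\eta$ and to add a separation condition: $\Phi(t)\neq0$ for $t\neq0$ in that ball. This is a nondegeneracy property of the tests that is implicit in "coarse localization". Compactness then makes $|\Phi|$ bounded below away from a small neighborhood of $0$. For small $\sigma$, the $O(\sigma)$ perturbation cannot create minimizers outside that neighborhood, so the global minimizer over the ball is the implicit-function one. A secondary point is uniformity: the expansion must hold in $C^1$ for translated arguments $z-t$ over the compact set $\cQ+B$, which \cref{thm:expansion} provides by enlarging $\cQ$.
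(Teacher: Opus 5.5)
Your argument is essentially correct for the estimator you define, but that estimator differs from the paper's, and the difference changes which hypotheses are needed. The paper does not fit a translated $F_0$ nonlinearly. It uses a weak calibration system that is \emph{linear} in the unknown offset. With reference $c_\sigma^{\rm ref}=x_0-\sigma b_0$ and normalized coordinate $y$, each scalar test $\psi$ gives a row $a_\psi(u)=(\int\psi u,\int\psi)$ and a response $r_\psi(u)=\int[-\nabla\psi^\top u+\psi(\norm{u}^2+y^\top u+D)]$. The tangent measure is a dilation-homogeneous union of one-dimensional rays. Homogeneity and the heat equation then give the pointwise identity $\nabla\cdot F_0(z)+\norm{F_0(z)}^2+z^\top F_0(z)+D-1=0$, so the exact shifted tangent score satisfies $r_0=A_0\beta_0$ with $\beta_0=(b_0,1)$. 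Rows and responses are polynomial in $u$, so the $C^0$ expansion of the observed score gives $A_\sigma=A_0+\sigma A_1+O(\sigma^{1+\alpha})$, and likewise for $r_\sigma$. Smoothness of the full-rank least-squares map then yields $\widehat\beta_\sigma=\beta_0+\sigma(A_0^\top A_0)^{-1}A_0^\top(r_1-A_1\beta_0)+O(\sigma^{1+\alpha})$, hence $c_\sigma^{\rm ref}+\sigma\widehat b_\sigma=x_0+\sigma^2b_1+O(\sigma^{2+\alpha})$.

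The linear route makes full column rank of $A_0$ the \emph{only} hypothesis needed. The least-squares solution is unique, and there is no search over offsets, no implicit function theorem, and no need to know $\{v_j,w_j\}$. Your nonlinear route needs $\{v_j,w_j\}$ to evaluate $F_0(z-t)$. As you noticed, it also needs a global separation condition, $\Phi(t)\ne0$ for $t\ne0$ on the search ball. Full rank of $J$ at $t=0$ is only local, so it cannot exclude spurious minimizers when the starting offset $\eta$ is $O(1)$ away in normalized units. As written, you therefore prove a variant of the proposition with an added hypothesis. The linear weak identity is the idea that removes it. In return, your argument does not rely on that homogeneity identity and would apply to any smooth translated model with a nondegenerate Jacobian.

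One technical repair is worth making. Your remainder $\epsilon(\sigma)$ does not depend on $t$, so apply the implicit function theorem in the parameter $\omega=\sigma g+\epsilon(\sigma)\in\R^M$ rather than in $\sigma$; the residual is not known to be $C^1$ in $\sigma$. The smooth solution map $T(\omega)=-(J^\top J)^{-1}J^\top\omega+O(\norm{\omega}^2)$ then gives the $O(\sigma^{1+\alpha})$ remainder directly. This needs no separate Newton step and no expansion of $F_\sigma$ on $\cQ+B$, because translations act only on the exact $F_0$.
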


In normalized coordinates, the calibration error is order $\sigma$; multiplying by the noise scale makes the physical error order $\sigma^2$.

The proposition is local: a coarse procedure must first place the calibration window within $O(\sigma)$ of $x_0$. \Cref{app:center} gives the full weak-system and least-squares expansion. More generally, we allow any first stage satisfying
\begin{equation}
c_\sigma=x_0+\sigma^2b+r_\sigma,
\qquad
\norm{r_\sigma}\le C\sigma^{2+\alpha}.
\label{eq:provisional-center}
\end{equation}
Use this same physical center at both noise scales and define
\begin{equation}
\cD^{c_\sigma}_{\sigma,\lambda}(z)
=\frac{
\lambda\sigma s_{\lambda\sigma}(c_\sigma+\lambda\sigma z)
-\sigma s_\sigma(c_\sigma+\sigma z)
}{(\lambda-1)\sigma}.
\label{eq:centered-difference}
\end{equation}

\begin{figure}[t]
    \centering
    \includegraphics[width=0.70\linewidth]{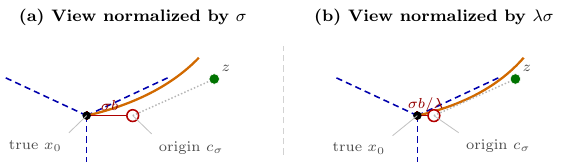}
    \caption{The same physical center error occupies different lengths in the two normalized views: $\sigma b$ at scale $\sigma$ and $\sigma b/\lambda$ at scale $\lambda\sigma$. Both panels keep the query location $z$ fixed.}
    \label{fig:center-bias-mechanism}
\end{figure}

\begin{theorem}[Center-bias expansion]
\label{thm:center-expansion}
Under \cref{ass:jet,eq:provisional-center}, uniformly on compact normalized query sets,
\begin{equation}
\cD^{c_\sigma}_{\sigma,\lambda}(z)
=G(z)-\frac{1}{\lambda}\nabla F_0(z)b+O(\sigma^\alpha),
\label{eq:center-expansion}
\end{equation}
where $\nabla F_0$ is the $D\times D$ Jacobian of the tangent score.
\end{theorem}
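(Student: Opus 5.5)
The plan is to rewrite each physical query in terms of the normalized score $F$ centered at the true $x_0$, and then Taylor-expand in the small normalized displacement caused by the center error. Since $c_\sigma+\tau z=x_0+\tau\bigl(z+(c_\sigma-x_0)/\tau\bigr)$ for $\tau\in\{\sigma,\lambda\sigma\}$, we have
\begin{equation*}
\tau s_\tau(c_\sigma+\tau z)=F_\tau\bigl(z+\delta_\tau\bigr),
\qquad
\delta_\tau=\frac{\sigma^2b+r_\sigma}{\tau}.
\end{equation*}
This gives $\delta_\sigma=\sigma b+O(\sigma^{1+\alpha})$ and $\delta_{\lambda\sigma}=\sigma b/\lambda+O(\sigma^{1+\alpha})$, matching \cref{fig:center-bias-mechanism}. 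Because $b$ is bounded, for $\sigma\le\sigma_0$ all shifted points $z+\delta_\tau$ with $z\in\cQ$ stay in a slightly enlarged compact set $\cQ'$, for example the closed unit neighborhood of $\cQ$. Every estimate below is taken on $\cQ'$.

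Next I apply \cref{thm:expansion} on $\cQ'$ with $m=2$, so the $C^1(\cQ')$ norm of $F_\tau-F_0-\tau G$ is $O(\tau^{1+\alpha})$. I then decompose
\begin{equation*}
F_\tau(z+\delta_\tau)=F_\tau(z)+\bigl[F_\tau(z+\delta_\tau)-F_\tau(z)\bigr].
\end{equation*}
For the bracket, the mean value theorem gives $\nabla F_\tau(\xi)\,\delta_\tau$ for some $\xi$ on the segment. The $C^1$ control yields $\nabla F_\tau=\nabla F_0+O(\tau)$ uniformly on $\cQ'$. Also $F_0=\nabla\log Q_0$ is smooth with $Q_0>0$, so $\nabla F_0$ is Lipschitz on $\cQ'$ and $\nabla F_0(\xi)=\nabla F_0(z)+O(\sigma)$. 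Combining these,
\begin{equation*}
F_\tau(z+\delta_\tau)-F_\tau(z)=\nabla F_0(z)\,\delta_\tau+O(\sigma^2)+O(\sigma\,\abs{\delta_\tau}).
\end{equation*}
Since $\abs{\delta_\tau}=O(\sigma)$, this equals $\nabla F_0(z)\,\delta_\tau+O(\sigma^{1+\alpha})$.

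Substituting into \cref{eq:centered-difference}, the unshifted parts reproduce $\cD_{\sigma,\lambda}(z)$, which is $G(z)+O(\sigma^\alpha)$ by \cref{cor:scale-difference}. The shift parts contribute
\begin{equation*}
\frac{\nabla F_0(z)\bigl(\delta_{\lambda\sigma}-\delta_\sigma\bigr)}{(\lambda-1)\sigma}
=\frac{\nabla F_0(z)\,\sigma b\,(\lambda^{-1}-1)}{(\lambda-1)\sigma}+O(\sigma^\alpha)
=-\frac1\lambda\nabla F_0(z)\,b+O(\sigma^\alpha),
\end{equation*}
using $(\lambda^{-1}-1)/(\lambda-1)=-1/\lambda$. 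The $O(\sigma^{1+\alpha})$ remainders, divided by $(\lambda-1)\sigma$, become $O(\sigma^\alpha)$, with constants depending on $\lambda$, $\cQ$, $\sup\norm{b}$ and $C$. This gives \cref{eq:center-expansion}.

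The main obstacle is justifying the uniform derivative control of $F_\tau$ on the enlarged set, specifically that $\nabla F_\tau\to\nabla F_0$ uniformly near $\cQ$. Without it, the shift error cannot be reduced to $O(\sigma^{1+\alpha})$. I would handle this by invoking \cref{thm:expansion} with $m=2$ on $\cQ'$ rather than on $\cQ$. The rest of the argument is bookkeeping.
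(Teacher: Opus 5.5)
Your argument is correct and is essentially the paper's proof. The paper likewise writes each query as $F_\tau(z+h_\tau)$ with $h_\sigma=\sigma b+O(\sigma^{1+\alpha})$ and $h_{\lambda\sigma}=\sigma b/\lambda+O(\sigma^{1+\alpha})$, and Taylor-expands using the uniform $C^1$ control from \cref{thm:expansion}. It then absorbs the $O(\sigma^2)$ quadratic and cross terms into $O(\sigma^{1+\alpha})$ and subtracts; your enlarged set $\cQ'$ only makes explicit a point the paper leaves implicit.

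One small fix: a vector-valued $F_\tau$ has no single mean-value point $\xi$. Use $\int_0^1\nabla F_\tau(z+t\delta_\tau)\dd t\,\delta_\tau$, or componentwise mean values, instead; either gives the same estimate.
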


The shared physical offset becomes $\sigma b$ at the small scale and $\sigma b/\lambda$ at the large scale, so their mismatch has a known translation shape.

The proof expands $F_\sigma$ at the two normalized offsets and subtracts them. All quadratic shift terms are smaller than the retained order. The complete Taylor calculation is in \cref{app:center}.

Append the $D$ known translation fields
\begin{equation}
C_r(z)=-\lambda^{-1}\nabla F_0(z)e_r,
\qquad r=1,\ldots,D.
\label{eq:center-basis}
\end{equation}
The inverse now has $(s+1)D$ coordinates. The only intrinsic exception is a tangent measure that is unchanged by translation along a nonzero direction; for a finite positive ray junction, this requires a complete line formed by opposite rays with matching density.

\begin{theorem}[Center-robust identifiability and scalar information]
\label{thm:center-identifiability}
If $\nu_0$ is not invariant under any nonzero translation, the map
\begin{equation}
(\{a_j,k_j\}_{j=1}^s,b)
\longmapsto
G-\lambda^{-1}\nabla F_0b
\label{eq:augmented-map}
\end{equation}
is injective. Consequently, $(s+1)D$ fixed scalar component evaluations are sufficient, and fewer than $(s+1)D$ fixed continuous scalar observations cannot identify every parameter vector in an open set.
\end{theorem}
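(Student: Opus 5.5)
The plan is to reduce injectivity of the augmented map to a statement about distributions on the tangent rays, in the spirit of the proof of \cref{thm:jet-identifiability}. Suppose $G-\lambda^{-1}\nabla F_0 b=0$ for some jets $\{a_j,k_j\}$ and some $b$. By \cref{eq:F0G}, $G=\nabla(Q_1/Q_0)$, and $\nabla F_0\,b=\nabla(\partial_b\log Q_0)=\nabla(\partial_b Q_0/Q_0)$. The vanishing gradient therefore gives
\[
Q_1-\lambda^{-1}\partial_bQ_0=cQ_0
\]
for a constant $c$, since $\R^D$ is connected. Each term is a Gaussian convolution of a distribution $T$ supported on $\bigcup_j\{uv_j:u\ge0\}$:
\begin{itemize}
\item $Q_0$ corresponds to $\nu_0$;
\item $Q_1$ corresponds to the first-order jet distribution $\tau$ from \cref{app:identifiability}, namely density weight $w_ja_ju$ along ray $j$ plus the normal dipole $\tfrac12 w_ju^2\,\partial_{k_j}$ acting on the ray;
\item $\partial_bQ_0$ corresponds to the translation derivative $-\partial_b\nu_0$ (up to sign convention).
\end{itemize}
Injectivity of Gaussian convolution on tempered distributions then gives $\tau+\lambda^{-1}\partial_b\nu_0-c\nu_0=0$ as distributions.

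Next I will separate the pieces ray by ray with the same narrowing-tube tests. Near an interior point $uv_j$ with $u>0$, the distribution $\partial_b\nu_0$ splits into two parts. The normal part is $w_j\partial_{b_\perp}\delta_{\text{ray}}$, with $b_\perp$ the component of $b$ orthogonal to $v_j$. The tangential part is $w_j\langle b,v_j\rangle\,\partial_u$ of the line measure; since the ray density is constant, this is zero away from the origin and leaves a boundary delta at $0$.

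Testing against normal-dipole test functions localized in a tube gives $\tfrac12 u^2k_j+\lambda^{-1}b_\perp^{(j)}=0$ for all $u>0$. The $u$-dependence separates the two terms, so $k_j=0$ and $b\perp$-component $=0$, that is, $b\parallel v_j$. Testing against functions that are constant across the tube gives $a_ju-c=0$ on the ray, so $a_j=0$ and $c=0$.

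If $b\neq0$, then $b$ must be parallel to every $v_j$. This forces $s\le2$, with the rays being $\pm b/\norm{b}$. The remaining condition is the vertex term $\partial_b\nu_0=0$ near the origin, which says the boundary deltas cancel: $w_+\langle b,v_+\rangle+w_-\langle b,v_-\rangle=0$. That requires two opposite rays with equal weights, which is exactly translation invariance of $\nu_0$ along a full line. This case is excluded by hypothesis, so $b=0$ and all jets vanish. More cleanly, I will phrase this step directly: once the jets are zero, $\partial_b\nu_0=c'\nu_0$ with $c'=0$ by homogeneity, and $\partial_b\nu_0=0$ is translation invariance.

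The counting statements then follow as in \cref{thm:sharp-query}. The $(s+1)D$ fields from \cref{eq:Afield,eq:Kfield,eq:center-basis} are linearly independent by the injectivity just shown. Point-coordinate evaluations separate finite-dimensional function spaces, so some $(s+1)D$ of them form an invertible square system. For the lower bound, invariance of domain rules out a continuous injection from an open subset of $\R^{(s+1)D}$ into fewer coordinates.

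The main obstacle I expect is the vertex bookkeeping at $u=0$. All rays meet there, so tube tests cannot isolate it, and I will need to track the delta and derivative-of-delta terms generated at the origin by $\tau$ and $\partial_b\nu_0$. Working strictly away from the origin together with the polynomial-in-$u$ separation should avoid most of this, leaving only the final translation-invariance characterization.
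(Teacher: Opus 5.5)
Your proposal is correct and follows the paper's proof. You use Gaussian injectivity to turn the vanishing augmented field into the identity $\eta-\lambda^{-1}b\cdot\nabla\nu_0=C\nu_0$; narrowing-tube tests then give $k_j=0$ and remove the normal components of $b$; the vertex term forces translation invariance; and the counting statements follow from the evaluation-basis lemma and invariance of domain. The only difference is the order: you eliminate $a_j$ and the constant with tube tests away from the origin before treating $b$, so the residual identity $b\cdot\nabla\nu_0=0$ gives translation invariance directly, and the paper's endpoint-atom computation (and the vertex bookkeeping you anticipated) is not needed.
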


Unless the junction itself is translation-invariant along a full line, a center shift cannot imitate branchwise curvature and density changes.

A narrowing-tube test first removes curvature and center components normal to each ray. The endpoint term can cancel only in the full-line case. Otherwise $b=0$, and \cref{thm:jet-identifiability} removes the density terms. \Cref{app:center} gives the proof and formal invariance criterion.

Estimating $b$ immediately refines the center:
\begin{equation}
\hat x_0=c_\sigma-\sigma^2\hat b.
\label{eq:center-refinement}
\end{equation}
If $\hat b-b=O(\sigma^\alpha)$, then
$\hat x_0-x_0=O(\sigma^{2+\alpha})$.

\section{Error Propagation and Finite Samples}
\label{sec:stability}

Finite-scale approximation, score noise, and first-stage geometry error determine accuracy. Let $p=sD$ for a known center and $p=(s+1)D$ for joint center recovery. Select $M\ge p$ scalar evaluations, stack the exact basis rows into $B\in\R^{M\times p}$, and let $y$ be the corresponding two-scale response for parameter vector $\theta$.

\begin{theorem}[Deterministic perturbation bound]
\label{thm:perturbation}
Assume $\sigma_{\min}(B)=\gamma>0$. Let the implemented design be $\widehat B=B+E$ and the observed response be $\widehat y=y+e$. If $\norm{E}_{\op}<\gamma$, the least-squares estimator obeys
\begin{equation}
\norm{\widehat\theta-\theta}
\le
\frac{
 C_{\mathrm{bias}}\sigma^\alpha
 +\norm{e}
 +\norm{E}_{\op}\norm{\theta}
}{\gamma-\norm{E}_{\op}}.
\label{eq:ls-bound}
\end{equation}
If the two normalized score fields have stacked errors $\varepsilon_\sigma$ and $\varepsilon_{\lambda\sigma}$, and the center remainder in \cref{eq:provisional-center} is $r_\sigma$, one may take
\begin{equation}
\norm{e}
\le
\frac{\varepsilon_\sigma+\varepsilon_{\lambda\sigma}}{(\lambda-1)\sigma}
+C_{\mathrm{ctr}}\frac{\norm{r_\sigma}}{\sigma^2}.
\label{eq:response-error}
\end{equation}
Errors in recovered directions and weights enter through $E$ by smooth dependence of the basis on the first-order geometry over any declared separated, positive-weight class.
\end{theorem}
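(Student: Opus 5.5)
The plan is to treat \cref{thm:perturbation} as a standard least-squares perturbation argument. The population model is $y_\star:=B\theta$, the exact response that the basis fields predict. The actual two-scale response $y$ differs from $y_\star$ only through the finite-scale remainder. By \cref{cor:scale-difference} (known center) or \cref{thm:center-expansion} (provisional center), each stacked row satisfies $y=B\theta+\beta$ with $\norm{\beta}\le C_{\mathrm{bias}}\sigma^\alpha$. Here $C_{\mathrm{bias}}$ absorbs $\sqrt M$ times the uniform constant over the compact query set containing the selected $z$'s.

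Hence the observed data satisfy
\begin{equation*}
\widehat y=\widehat B\theta+\beta+e-E\theta .
\end{equation*}
Since $\norm{E}_{\op}<\gamma$, Weyl's inequality gives $\sigma_{\min}(\widehat B)\ge\gamma-\norm{E}_{\op}>0$. So $\widehat B$ has full column rank, and $\widehat\theta=\widehat B^{+}\widehat y$ with $\widehat B^{+}\widehat B=I$. Therefore
\begin{equation*}
\widehat\theta-\theta=\widehat B^{+}(\beta+e-E\theta),
\qquad
\norm{\widehat B^{+}}_{\op}\le(\gamma-\norm{E}_{\op})^{-1}.
\end{equation*}
Applying the triangle inequality yields \cref{eq:ls-bound} directly.

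For \cref{eq:response-error}, I would split the response error rowwise. First, write $\lambda\sigma\hat s_{\lambda\sigma}-\sigma\hat s_\sigma$ as the exact quantity plus the score errors. After dividing by $(\lambda-1)\sigma$, the stacked score errors contribute at most $(\varepsilon_\sigma+\varepsilon_{\lambda\sigma})/((\lambda-1)\sigma)$.

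Second, comparing the center $c_\sigma=x_0+\sigma^2 b+r_\sigma$ with the idealized center $x_0+\sigma^2b$ changes the normalized query points by $r_\sigma/\sigma$ and $r_\sigma/(\lambda\sigma)$. The normalized score $F_\sigma$ is $C^1$ uniformly on compact sets by \cref{thm:expansion}, with Lipschitz constant bounded by $\norm{\nabla F_0}_\infty+O(\sigma)$. So each of the two terms in the difference moves by $O(\norm{r_\sigma}/\sigma)$, and dividing by $(\lambda-1)\sigma$ gives $C_{\mathrm{ctr}}\norm{r_\sigma}/\sigma^2$. This is consistent with \cref{thm:center-expansion}, where $\sigma^2 b$ produced an $O(1)$ term. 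The same Taylor argument from \cref{app:center} applies with $b$ replaced by $r_\sigma/\sigma^2$.

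The last claim reduces to regularity of the basis. On a class where the directions are separated and the weights are bounded below, $Q_0$ is bounded below on the compact query set. The integrals $J_{j,m}$ and their $z$-derivatives depend smoothly on $v_j$, and $N_j$ can be chosen locally smoothly. Consequently \cref{eq:Afield,eq:Kfield,eq:center-basis} are Lipschitz in $(v_j,w_j)$, which gives $\norm{E}_{\op}\lesssim\max_j(\norm{\hat v_j-v_j}+\abs{\hat w_j-w_j})$.

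The main obstacle is the center-remainder step. It needs the derivative bounds from \cref{thm:expansion} to hold uniformly for query points slightly outside the query set, and it needs the basis frames $N_j$ to vary continuously. For the first, I would enlarge the compact set by a unit margin. For the second, I would handle any failure of a global smooth frame by observing that only the span of $\{K_{j,r}\}$ matters, and absorbing the frame rotation into the parameterization.
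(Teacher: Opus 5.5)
Your proposal is correct and follows the paper's proof essentially step for step. You use the pseudoinverse identity $\widehat\theta-\theta=\widehat B^{+}(\beta+e-E\theta)$ together with Weyl's inequality to get \cref{eq:ls-bound}, split $e$ into the scale-differenced score error plus a mean-value bound on the normalized displacements $r_\sigma/\sigma$ and $r_\sigma/(\lambda\sigma)$ to get \cref{eq:response-error}, and use Lipschitz dependence of the basis on $(v_j,w_j)$ over a separated, positive-weight class for the last claim. Your extra care in enlarging the compact query set and choosing the frames $N_j$ locally smoothly fills in details the paper leaves implicit.
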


The bound makes conditioning explicit: finite-scale bias and response noise are amplified by the inverse smallest singular value of the selected design.

The bound follows from the pseudoinverse identity and Weyl's inequality; \cref{app:stability} gives the proof. Its main message is the decomposition: scale differencing divides normalized-score noise by $\sigma$, while an unmodeled physical center remainder is divided by $\sigma^2$.

\begin{corollary}[Kernel-density score rate]
\label{cor:kde-rate}
Assume a fixed finite query design, a positive lower bound on local tangent mass, bounded branch jets, and the conditions of \cref{thm:perturbation}. For $N$ independent samples and fixed $\lambda>1$, the score of the empirical Gaussian kernel density estimate (KDE) gives, with probability at least $1-\delta$,
\begin{equation}
\norm{\widehat\theta-\theta}
\le C\left[
\sigma^\alpha
+\sqrt{\frac{\log(M/\delta)}{N\sigma^3}}
+\frac{\log(M/\delta)}{N\sigma^2}
+\text{first-stage error}
\right].
\label{eq:kde-rate}
\end{equation}
Ignoring the lower-order Bernstein term and assuming a comparably accurate first stage,
\begin{equation}
\sigma_{\mathrm{opt}}
\asymp \left(\frac{\log(M/\delta)}{N}\right)^{1/(2\alpha+3)},
\qquad
\norm{\widehat\theta-\theta}
=O_{\mathbb P}\!\left(
\frac{\log(M/\delta)}{N}
\right)^{\alpha/(2\alpha+3)}.
\label{eq:optimal-rate}
\end{equation}
\end{corollary}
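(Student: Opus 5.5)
The plan is to feed \cref{thm:perturbation} a score error for the KDE and then balance the terms in $\sigma$. Only the response error $\norm{e}$ needs new work. The bias term $C_{\mathrm{bias}}\sigma^\alpha$ is already supplied by \cref{cor:scale-difference} and \cref{thm:center-expansion}. The design error $E$ and the center remainder $r_\sigma$ are, by hypothesis, the ``first-stage error''. So by \eqref{eq:response-error} the task reduces to a high-probability bound on the stacked normalized-score errors $\varepsilon_\sigma$ and $\varepsilon_{\lambda\sigma}$ at the $M$ fixed physical query points $c_\sigma+\sigma z$ and $c_\sigma+\lambda\sigma z$. This bound must be uniform over the $M$ rows, which is where $\log(M/\delta)$ enters via a union bound. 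The center $c_\sigma$ is treated as fixed, or as obtained from an independent split or a first-stage error, so the query points are deterministic given it.

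At each query point $x$, write the KDE score as $\hat s=\nabla\hat p/\hat p$. Here $\hat p(x)=N^{-1}\sum_i\varphi_\sigma(x-X_i)$ and $\nabla\hat p(x)=N^{-1}\sum_i \sigma^{-2}(X_i-x)\varphi_\sigma(x-X_i)$. I would rescale everything by the local mass normalization. By \cref{thm:expansion} and the tangent-mass lower bound, $p_\sigma(x)\asymp \sigma^{1-D}$, since a one-dimensional branch is smoothed in $D$ dimensions. The summands of $\hat p$ are then bounded by $\sigma^{-D}$, and their variance is at most $\sigma^{-D}p_\sigma\asymp\sigma^{1-2D}$. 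Bernstein's inequality gives a relative error $|\hat p/p_\sigma-1|\lesssim \sqrt{L/(N\sigma)}+L/(N\sigma)$ with $L=\log(M/\delta)$. The same bound holds for $\sigma\nabla\hat p/p_\sigma$ componentwise, because the factor $(X_i-x)/\sigma$ is effectively bounded under the Gaussian weight; to make this rigorous I would truncate at $|X_i-x|\le\sigma\sqrt{\log N}$ or use a sub-Gaussian Bernstein variant. Linearizing the ratio then gives a normalized score error $\varepsilon\lesssim\sqrt{L/(N\sigma)}+L/(N\sigma)$ per coordinate on the event where the relative density error is below $1/2$. Multiplying by $\sqrt M$, which is absorbed into $C$ because the design is fixed, and dividing by $(\lambda-1)\sigma$ as in \eqref{eq:response-error} produces exactly the $\sqrt{L/(N\sigma^3)}$ and $L/(N\sigma^2)$ terms of \eqref{eq:kde-rate}. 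The KDE bias needs no separate treatment: $\mathbb E\hat s$ is not $s_\sigma$, but the ratio of expectations $\nabla p_\sigma/p_\sigma$ is exactly $s_\sigma$, so the only bias is the finite-scale one already in $C_{\mathrm{bias}}\sigma^\alpha$. Ratio nonlinearity only contributes to the lower-order term.

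I expect the main obstacle to be getting the variance scaling right and uniform. The variance lower bound and the upper bound $p_\sigma\lesssim\sigma^{1-D}$ at query points must hold uniformly in $z$ over the compact design, including points near the center where several branches overlap, and for both scales. I would handle this using the uniform positivity of $Q_0$ on compacts from \cref{thm:expansion}, together with the $O(\sigma^{1+\alpha})$ closeness of the rescaled density, so that $p_\sigma(x)\sigma^{D-1}$ is bounded above and below for $\sigma\le\sigma_0$. The far measure $\mu_{\mathrm{far}}$ contributes only $e^{-c/\sigma^2}$. Finally, I drop the Bernstein term and a comparable first stage, then balance $\sigma^\alpha$ against $\sqrt{L/(N\sigma^3)}$. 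This gives $\sigma^{2\alpha+3}\asymp L/N$, which is \eqref{eq:optimal-rate}. At that $\sigma$ the dropped term is $L/(N\sigma^2)=\sigma^{2\alpha+1}\le\sigma^\alpha$, which confirms that dropping it was consistent.
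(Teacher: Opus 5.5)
Your proposal is correct and follows essentially the same route as the paper. Both arguments union-bound Bernstein over the $M$ queries for the zeroth and first Gaussian moments, use the local mass lower bound $p_\sigma\gtrsim\sigma^{1-D}$ (equivalently $a\gtrsim\sigma$) to bound the ratio, divide by $(\lambda-1)\sigma$ via \eqref{eq:response-error}, and balance $\sigma^\alpha$ against $(N\sigma^3)^{-1/2}$. The truncation you propose is unnecessary, because $\norm{X_i-x}\sigma^{-1}e^{-\norm{x-X_i}^2/(2\sigma^2)}\le e^{-1/2}$ holds deterministically; this is what lets the paper apply a Hilbert-space Bernstein inequality directly to the first-moment summands.
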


The variance term reflects two losses: only $N\sigma$ samples fall locally, and the scale difference contributes another factor $1/\sigma$.

The rate balances the deterministic remainder $\sigma^\alpha$ against the scale-differenced sampling fluctuation. A learned score enters the same response bound through its errors at both scales. Thus cross-scale consistency, rather than accuracy at either scale alone, is the quantity that controls second-order recovery.

%% file: sections/05_experiments.tex
\section{Experiments}
\label{sec:experiments}

Each experiment tests one proof interface: finite-scale expansion, query conditioning, sampling, first-stage composition, or learned-score consistency. All results use the retained \textbf{\ResultLabel} configuration; \cref{app:experiments} gives full grids, seeds, hardware, and tables.

\paragraph{Finite-scale expansion.}
Across \PopulationInstances{} population fits in $D\in\{2,3,5\}$, the median remainder exponent is \PopulationMedianSlope{} (5th--95th percentile \PopulationSlopeQfive{}--\PopulationSlopeQninetyfive{}), with median log--log $R^2$ \PopulationMedianRtwo{}. Exact- and $O(\sigma^2)$-biased-center runs follow the predicted order in \cref{thm:expansion,thm:center-expansion}, providing an independent numerical check of the analytic results (\cref{fig:paper-main}).

\paragraph{Scalar information versus stable designs.}
At response-noise RMS $10^{-3}$ and twice the parameter count, determinant-maximizing (D-optimal) row selection attains median relative error \QueryDOptMedianRelativeError{}, \QueryRandomDOptErrorRatio{} times lower than random selection. Overdetermination and conditioning-aware rows increase the smallest singular value: \cref{thm:sharp-query} gives the information limit, whereas design controls noise amplification.

\paragraph{How geometry controls stability and scale.}
The 42-cell phase map in \cref{fig:revision-main}a has 20 seeds per cell. From $(3^\circ,0.01)$ to $(45^\circ,0.2)$, median error falls from \StabilityNearCollisionError{} to \StabilityWellSeparatedError{} (\StabilityErrorRatio{}-fold) and condition number from \StabilityNearCollisionCondition{} to \StabilityWellSeparatedCondition{}. All 200 scale-up systems remain full rank through $D=\ScaleLargestDimension{}$, \ScaleLargestBranches{} branches, \ScaleLargestParameters{} unknowns, and \ScaleLargestQueries{} queries. These runs use perturbed supplied geometry; the frontend study below covers $D=3$--$5$. In the largest case, strong first-order perturbation raises error from \ScaleLargestExactError{} to \ScaleLargestStrongError{}, so first-stage accuracy controls error even when rank is preserved.

\paragraph{Why match two scales?}
Naive subtraction divides first-order error by the base scale. Its population error is \ResponsePopulationMildRatio{}, \ResponsePopulationModerateRatio{}, and \ResponsePopulationStrongRatio{} times the matched error under mild, moderate, and strong perturbations; the smallest-scale strong ratio is \ResponsePopulationSmallSigmaStrongRatio{}. KDE noise reduces it to \ResponseKDEStrongRatio{}. The exact-$F_0$ curve is an idealized reference for perfect nuisance removal; with estimated geometry, matched responses avoid the $1/\sigma$ amplification (\cref{fig:revision-main}b).

\begin{figure}[t]
\centering
\includegraphics[width=0.86\linewidth]{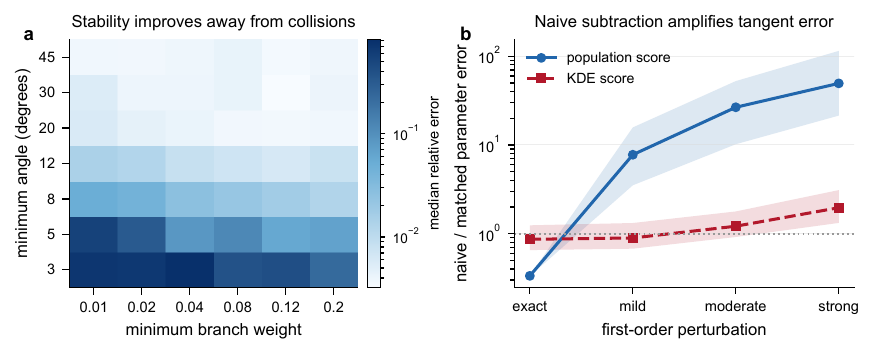}
\caption{Stability and response construction. (a) Median error over 20 seeds per angle--weight cell grows as rays collide or weights vanish. (b) Paired naive/matched error ratio; values above one favor the matched response. Bands are interquartile ranges.}
\label{fig:revision-main}
\end{figure}

\paragraph{Finite-sample kernel density estimates.}
Across four planar junctions and twenty seeds, all seed-bootstrap 95\% intervals contain the predicted $N^{-1/5}$ slope, and error falls by \KDEEndpointReductionMinPercent{}--\KDEEndpointReductionMaxPercent{}\%. Seed halves select bandwidth multipliers within $0.4$ in \KDEWithinPointFourAgreementPercent{}\% of comparisons; the dense $[1.8,3.2]$ sweep identifies a broad stable operating region across geometries (\cref{fig:kde-grid}).

\paragraph{Estimated first-order frontends.}
On planar junctions, a Fourier-based frontend estimates branch count, directions, and weights. Blind count accuracy is \EndToEndPopulationBlindCountPercent{}\% for population and \EndToEndKDEBlindCountPercent{}\% for KDE scores. A separate known-count frontend fits the same first-order quantities directly from finite-noise scores in $D=3$--$5$ with up to \GeneralEtoEMaxBranches{} branches. All \GeneralEtoEEstimatedCases{} second-stage systems are full rank. Population runs have median maximum tangent error \GeneralEtoEPopulationAngle{} radians and median relative jet error \GeneralEtoEPopulationError{}. With KDE scores, these medians are \GeneralEtoEKDEAngle{} and \GeneralEtoEKDEError{}; at $\sigma=\GeneralEtoELargestSigma{}$, the dimensionwise median jet errors range from \GeneralEtoEKDELargestSigmaMin{} to \GeneralEtoEKDELargestSigmaMax{} (\cref{tab:general-end-to-end}). Changing the query rows has negligible effect (median error ratio \DiagnosticQueryRowRatio{}). In population runs, the estimated basis absorbs part of the finite-scale remainder, giving an estimated/exact-basis error ratio of \DiagnosticPopulationEstimatedExactBasisRatio{} (\cref{fig:revision-diagnostics}).

\paragraph{Learned scores.}
Across \LearnedJobCount{} base jobs, the training trajectories separate single-scale score fit from recovered-jet accuracy. In the enlarged hard-case study (width \LearnedScaleupWidth{}, \LearnedScaleupBlocks{} blocks, \LearnedScaleupSteps{} updates), median jet error reaches \LearnedScaleupBestParameterError{} at \LearnedScaleupBestStep{}; score RMS continues to improve to \LearnedScaleupFinalScoreRMS{} at the final checkpoint, where jet error is \LearnedScaleupFinalParameterError{}. This controlled synthetic experiment isolates cross-scale consistency as the mechanism used by the inverse (\cref{fig:learned-grid,fig:revision-diagnostics}).

The support material contains the experiment code and its full configuration; each entry point writes its numerical observations as CSV files (\cref{app:experiments}).

%% file: sections/06_related_limitations.tex
\section{Related Work}
\label{sec:related}

Classical work reconstructs manifolds and estimates tangent spaces or curvature from samples \citep{federer1959curvature,preiss1987geometry,niyogi2008homology,ozertem2011principal,genovese2012minimax,aamari2019nonasymptotic}; score methods recover first-order and smooth-support geometry, including weighted singular tangents \citep{stanczuk2024dimension,ventura2025manifolds,diepeveen2025pullback,kharitenko2026landing,li2026rates,weightedtangent2026}. Our target is instead the branchwise second-order continuation of a singular junction on supplied rays.

Denoising identities expose local scores \citep{vincent2011connection,alain2014regularized}, and forward asymptotics derive their density and curvature corrections \citep{brosse2026boundary,zhang2026manifold,rawal2026rao}. Those results map known geometry to a correction. We solve the converse coefficient problem: uniquely decomposing the superposed correction with finite scalar information while accounting for center translation. Unlike Prony and super-resolution, the unknowns are coefficients on supplied rays, not atom locations \citep{kunis2016prony,kunis2019sphere,candes2014superresolution}.

\section{Limitations and Conclusion}
\label{sec:limitations}

\paragraph{Limitations.}
The theory covers zero-thickness $C^{2,\alpha}$ half-branches, positive $C^{1,\alpha}$ densities, Gaussian smoothing, two known scales, branch correspondence, and supplied first-order geometry. The condition number quantifies the effect of ray collisions and vanishing weights, and the KDE rate is conditioned on first-stage accuracy. Experiments separate full-pipeline recovery with known counts in $D=3$--$5$ and blind planar counts from second-stage scaling through $D=20$ and 16 branches with supplied perturbed geometry. The full-line case exhibits the intrinsic translation symmetry, while learned-score runs isolate cross-scale consistency on controlled synthetic junctions.

\paragraph{Conclusion.}
The first score correction has an inverse role: on supplied distinct rays, its superposed field uniquely determines how every branch bends and how its density changes. Matched subtraction makes the correction observable with sharp scalar-information complexity, while translation modes accommodate imperfect localization. Known-count experiments carry score-estimated first-order geometry through the inverse in $D=3$--$5$, and the supplied-geometry interface scales to larger systems. The stability map, matched-response comparison, and learned-score trajectories identify the geometric and statistical conditions that govern accuracy. Branch jets thus test whether a score field retains junction continuation and provide a target for thicker supports, higher-dimensional strata, pretrained models, and adaptive queries.

%% file: appendices/A_expansion.tex
\section{Proof of the two-term expansion}
\label{app:expansion}

The local argument below tracks every power of $\sigma$. Constants may depend on the compact query set, the branch count, the jet bounds, $r_0$, $\Delta$, and the far mass, but not on $\sigma$.

\subsection{A Gaussian envelope}

\begin{lemma}[Polynomial Gaussian envelope]
\label{lem:envelope}
Fix a compact $\cQ\subset\R^D$. There are $c>0$ and constants $C_{m,\cQ}$ such that the following holds. Let $v\in\Sph^{D-1}$, $u\ge0$, and let $q_t=uv+t\xi$ for $0\le t\le1$, with $\norm{\xi}\le u/2$. For every fixed mixed-derivative order $m$ of $K_z(q)=e^{-\norm{z-q}^2/2}$ in $(z,q)$,
\begin{equation}
\abs{\partial^m K_z(q_t)}
\le C_{m,\cQ}(1+u)^{m}e^{-cu^2},
\qquad z\in\cQ.
\label{eq:envelope}
\end{equation}
The same form holds for vector and matrix derivatives.
\end{lemma}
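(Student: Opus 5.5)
The plan is to reduce every mixed derivative to the form ``polynomial times the Gaussian itself,'' and then use the geometric constraint $\norm{\xi}\le u/2$ to keep $\norm{q_t}$ comparable to $u$. The Gaussian then dominates once $u$ is large compared with the diameter of $\cQ$.

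\emph{Derivative structure.} Write $w=z-q$, so $K_z(q)=e^{-\norm{w}^2/2}$. Differentiating in $z$ or $q$ is, up to sign, differentiation in $w$. By induction on $m$, any mixed partial of order $m$ has the form $P_m(w)e^{-\norm{w}^2/2}$, where $P_m$ is a polynomial of degree at most $m$ (a Hermite-type polynomial). Its coefficients depend only on $m$ and $D$. Vector- and matrix-valued derivatives are finite collections of such entries, which handles the last sentence of the statement. It follows that $\abs{\partial^m K_z(q)}\le C_m(1+\norm{w})^m e^{-\norm{w}^2/2}$.

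\emph{Localizing $q_t$.} Since $q_t=uv+t\xi$ with $0\le t\le1$ and $\norm{\xi}\le u/2$, the triangle inequality gives $u/2\le\norm{q_t}\le 3u/2$. Let $R=\sup_{z\in\cQ}\norm{z}$. Then $\norm{w}\le R+3u/2$, so the polynomial factor is at most $C(1+u)^m$, with the constant absorbing $R$. For the exponential there are two cases.
\begin{itemize}
\item If $u\ge 4R$, then $\norm{w}\ge\norm{q_t}-\norm{z}\ge u/2-R\ge u/4$. Hence $e^{-\norm{w}^2/2}\le e^{-u^2/32}$.
\item If $u<4R$, then $e^{-cu^2}\ge e^{-16cR^2}$. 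The exponential factor is at most $1\le e^{16cR^2}e^{-cu^2}$, and the factor $e^{16cR^2}$ is absorbed into $C_{m,\cQ}$.
\end{itemize}
Taking $c=1/32$ in both cases gives \eqref{eq:envelope}, uniformly in $v\in\Sph^{D-1}$, $t$, and $\xi$. The constant $c$ depends on nothing, and $C_{m,\cQ}$ depends only on $m$, $D$, and $R$.

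\emph{Main obstacle.} The only real point is obtaining the lower bound $\norm{z-q_t}\gtrsim u$ uniformly. This is exactly where the hypothesis $\norm{\xi}\le u/2$ is needed: it keeps the interpolated point on the segment, which in the application joins $\sigma u v_j$-type points to the true curve, from drifting back toward the origin and thereby losing the Gaussian decay. This is routine once the two-case split at $u=4R$ is made. The remaining bookkeeping, namely the polynomial degree count and the absorption of constants, is straightforward.
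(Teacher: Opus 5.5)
Your proposal is correct and follows the paper's own proof essentially step for step: derivatives are a polynomial in $z-q_t$ times the same Gaussian, $u/2\le\norm{q_t}\le 3u/2$, the split at $u=4R$ gives $\norm{z-q_t}\ge u/4$, and for bounded $u$ the constants are absorbed. You additionally make explicit what the paper leaves implicit, namely the choice $c=1/32$, the degree-$m$ bound on the polynomial factor, and the absorption of $e^{16cR^2}$ into $C_{m,\cQ}$.
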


\begin{proof}
Let $R=\sup_{z\in\cQ}\norm{z}$. Since $u/2\le\norm{q_t}\le3u/2$, for $u\ge4R$ we have $\norm{z-q_t}\ge u/4$. For bounded $u$, every polynomial factor is absorbed into the constant. Derivatives of a Gaussian equal a polynomial in $z-q_t$ times the same Gaussian, giving the claim.
\end{proof}

\subsection{Expansion of the rescaled convolution}

After changing variables $r=\sigma u$, remove the common factor $(2\pi)^{-D/2}\sigma^{1-D}$ and write
\begin{equation}
\widetilde Q_\sigma(z)
=\sum_{j=1}^s\int_0^{r_0/\sigma}
\rho_j(\sigma u)K_z(q_{j,\sigma}(u))\dd u
+R_{\mathrm{far},\sigma}(z),
\label{eq:Qsigma-rescaled}
\end{equation}
where $q_{j,\sigma}(u)=(\gamma_j(\sigma u)-x_0)/\sigma$.

By \cref{eq:curve-rem},
\begin{equation}
q_{j,\sigma}(u)=uv_j+\delta_{j,\sigma}(u),
\qquad
\delta_{j,\sigma}(u)
=\frac{\sigma}{2}u^2k_j+\epsilon_{j,\sigma}(u),
\quad
\norm{\epsilon_{j,\sigma}(u)}
\le L_\gamma\sigma^{1+\alpha}u^{2+\alpha}.
\label{eq:delta}
\end{equation}
Decrease $r_0$ if needed so that $\norm{\delta_{j,\sigma}(u)}\le u/2$ for $0\le u\le r_0/\sigma$. Taylor's theorem and \cref{lem:envelope} give, uniformly in $z\in\cQ$,
\begin{align}
K_z(q_{j,\sigma}(u))
&=K_z(uv_j)
+\nabla_qK_z(uv_j)^\trans\delta_{j,\sigma}(u)
+R_{j,\sigma}(z,u),
\label{eq:kernel-taylor}\\
\abs{R_{j,\sigma}(z,u)}
+\norm{\nabla_zR_{j,\sigma}(z,u)}
&\le C\left[
\sigma^{1+\alpha}P_1(u)+\sigma^2P_2(u)
\right]e^{-cu^2}
\le C\sigma^{1+\alpha}P(u)e^{-cu^2}.
\label{eq:kernel-remainder}
\end{align}
The last inequality uses $0<\alpha\le1$ and $\sigma\le1$. Since
$\nabla_qK_z(uv_j)=(z-uv_j)K_z(uv_j)$ and $k_j\perp v_j$,
\begin{equation}
\nabla_qK_z(uv_j)^\trans\frac{\sigma}{2}u^2k_j
=\frac{\sigma}{2}u^2\ip{k_j}{z}K_z(uv_j).
\label{eq:curvature-term}
\end{equation}

Likewise, \cref{eq:density-rem} gives
\begin{equation}
\rho_j(\sigma u)
=w_j(1+\sigma a_ju)+r_{j,\sigma}(u),
\qquad
\abs{r_{j,\sigma}(u)}
\le L_\rho\sigma^{1+\alpha}u^{1+\alpha}.
\label{eq:density-taylor}
\end{equation}
Multiplying \cref{eq:kernel-taylor,eq:density-taylor}, collecting the order-zero and order-$\sigma$ terms, and applying the envelope to cross terms yields
\begin{equation}
\rho_j(\sigma u)K_z(q_{j,\sigma}(u))
=w_jK_z(uv_j)
+\sigma w_j\left[
 a_ju+\frac12u^2\ip{k_j}{z}
\right]K_z(uv_j)
+\mathcal R_{j,\sigma}(z,u),
\label{eq:integrand-expansion}
\end{equation}
with
\begin{equation}
\abs{\mathcal R_{j,\sigma}(z,u)}
+\norm{\nabla_z\mathcal R_{j,\sigma}(z,u)}
\le C\sigma^{1+\alpha}P(u)e^{-cu^2}.
\label{eq:integrand-rem}
\end{equation}
The right-hand side is integrable on $[0,\infty)$. Extending the tangent integrals from $r_0/\sigma$ to infinity contributes a Gaussian tail smaller than every power of $\sigma$.

For the far measure, $\norm{x-x_0}\ge\Delta$. On a compact normalized query set and for small $\sigma$,
$\norm{(x-x_0)/\sigma-z}\ge\Delta/(2\sigma)$. Hence the rescaled far denominator and its $z$-gradient are bounded by a polynomial in $\sigma^{-1}$ times $e^{-\Delta^2/(8\sigma^2)}$, again smaller than $\sigma^{1+\alpha}$. Integrating \cref{eq:integrand-expansion} proves \cref{eq:density-expansion}.

Finally, $Q_0$ is strictly positive and has a positive minimum on $\cQ$. Write
$\widetilde Q_\sigma=Q_0[1+\sigma Q_1/Q_0+R_\sigma/Q_0]$. For every fixed $m$, the preceding envelope applies to all $z$-derivatives through order $m$. The map $x\mapsto\log x$ is smooth on a compact interval bounded away from zero, so composition in $C^m$ gives
\begin{equation}
\log\widetilde Q_\sigma
=\log Q_0+\sigma Q_1/Q_0+O_{C^m}(\sigma^{1+\alpha})
\label{eq:log-expansion}
\end{equation}
which proves \cref{eq:score-expansion}.

%% file: appendices/B_identifiability.tex
\section{Distributional proof of jet identifiability}
\label{app:identifiability}

Let $K(z)=e^{-\norm z^2/2}$. The tangent transform is $Q_0=K*\nu_0$. Associate to the jet parameters the tempered distribution
\begin{equation}
\ip{\eta}{\phi}
=\sum_{j=1}^s w_j\int_0^\infty
\left[
 a_ju\phi(uv_j)
 +\frac12u^2\ip{k_j}{\nabla\phi(uv_j)}
\right]\dd u.
\label{eq:eta-distribution}
\end{equation}
Direct evaluation on the translated kernel $y\mapsto K(z-y)$ gives
\begin{align}
(K*\eta)(z)
&=\sum_jw_j\int_0^\infty\left[
 a_juK(z-uv_j)
 +\frac12u^2\ip{k_j}{\nabla_yK(z-y)|_{y=uv_j}}
\right]\dd u
\nonumber\\
&=\sum_jw_j\int_0^\infty\left[
 a_ju+\frac12u^2\ip{k_j}{z-uv_j}
\right]K(z-uv_j)\dd u
=Q_1(z),
\label{eq:convolution-eta}
\end{align}
where $\ip{k_j}{v_j}=0$.

Suppose $G=0$. Since $Q_0>0$ and $\R^D$ is connected,
$\nabla(Q_1/Q_0)=0$ implies $Q_1=CQ_0$. Thus
$K*(\eta-C\nu_0)=0$. Taking Fourier transforms in the space of tempered distributions gives
\begin{equation}
\widehat K(\xi)\bigl(\widehat\eta-C\widehat\nu_0\bigr)=0.
\label{eq:fourier-injectivity}
\end{equation}
The Gaussian multiplier $\widehat K(\xi)$ is everywhere positive, so $\eta=C\nu_0$.

Fix branch $j$ and an interval $I=(u_-,u_+)\Subset(0,\infty)$ such that the compact ray segment $\{uv_j:u\in\overline I\}$ has a tubular neighborhood disjoint from the other rays and the origin. In local coordinates $y=uv_j+n$, $n\in v_j^\perp$, choose
$\phi(y)=\chi(u)\psi(n)$ with $\psi(0)=0$ and arbitrary $\nabla\psi(0)$. Then $\nu_0(\phi)=0$, and all branches except $j$ vanish by support. Equality $\eta=C\nu_0$ gives
\begin{equation}
\frac{w_j}{2}\int_Iu^2\chi(u)\ip{k_j}{\nabla\psi(0)}\dd u=0.
\label{eq:k-test}
\end{equation}
Arbitrary $\chi$ and $\nabla\psi(0)$ imply $k_j=0$.

Now take $\psi(0)=1$ and $\nabla\psi(0)=0$. Then
\begin{equation}
w_j\int_I(a_ju-C)\chi(u)\dd u=0
\label{eq:a-test}
\end{equation}
for every $\chi\in C_c^\infty(I)$. Hence $a_ju=C$ on $I$. Since $I$ contains more than one point, $a_j=0$ and $C=0$. Repeating for every branch proves the nullspace is trivial. Applying the result to the difference of two parameter collections proves \cref{thm:jet-identifiability}.

%% file: appendices/C_query_complexity.tex
\section{Finite evaluations and the query lower bound}
\label{app:query-complexity}

\begin{lemma}[Evaluation basis]
\label{lem:evaluation-basis}
Let $f_1,\ldots,f_p:X\to\R^D$ be linearly independent as functions. Then there exist pairs $(x_m,r_m)\in X\times\{1,\ldots,D\}$, $m=1,\ldots,p$, such that the matrix
$[e_{r_m}^\trans f_\ell(x_m)]_{m,\ell=1}^p$ is nonsingular.
\end{lemma}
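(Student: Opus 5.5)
The plan is to treat the scalar evaluations as linear functionals on the finite-dimensional space $V=\Span\{f_1,\ldots,f_p\}$. For each pair $(x,r)\in X\times\{1,\ldots,D\}$ define the functional
\[
\ell_{x,r}(f)=e_r^\trans f(x),
\qquad f\in V.
\]
These functionals are linear on $V$. Let $W\subseteq V^*$ be their span. The aim is to show $W=V^*$, and then to extract a basis of $V^*$ from the spanning family $\{\ell_{x,r}\}$.

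To show $W=V^*$, I would argue by annihilators. Suppose $f=\sum_\ell c_\ell f_\ell\in V$ is killed by every $\ell_{x,r}$. Then every coordinate of $f(x)$ vanishes at every $x\in X$, so $f\equiv0$ as a function. Linear independence of the $f_\ell$ then forces $c=0$. Hence the annihilator of $W$ in $V$ is trivial. Since $\dim V=p$ is finite, $W=V^*$.

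A spanning family of a $p$-dimensional space contains a basis. Equivalently, I would build one greedily: add a functional $\ell_{x_m,r_m}$ whenever it is not in the span of those already chosen, which stops after exactly $p$ steps. This gives $p$ pairs $(x_m,r_m)$ whose functionals are linearly independent in $V^*$.

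Finally, I would translate back to the matrix. Let $M=[\ell_{x_m,r_m}(f_\ell)]_{m,\ell}$ and suppose $Mc=0$. Then all $p$ chosen functionals vanish on $f=\sum_\ell c_\ell f_\ell$. Because these functionals form a basis of $V^*$, this forces $f=0$, and so $c=0$. Thus $M$ is nonsingular.

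The only point needing care is the infinite index set $X$, which rules out a naive row-rank argument on an infinite matrix. The duality argument avoids this entirely, since it only uses finite-dimensionality of $V$. No topology on $X$ is needed.
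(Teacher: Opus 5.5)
Your proof is correct and follows the paper's argument essentially step for step: point-coordinate functionals separate points of the $p$-dimensional span $V$, so they span $V^*$, and any $p$ of them forming a basis of $V^*$ give a nonsingular matrix. Your annihilator argument, greedy extraction and explicit kernel check only spell out the finite-dimensional duality facts that the paper uses without comment.
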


\begin{proof}
Let $V=\Span\{f_1,\ldots,f_p\}$. Point-coordinate evaluations $L_{x,r}(f)=e_r^\trans f(x)$ separate points of $V$: if every such functional vanishes on $f\in V$, then $f$ is the zero function. Therefore their restrictions span $V^*$. Select $p$ of them forming a basis of $V^*$; their matrix in the basis $f_1,\ldots,f_p$ is nonsingular.
\end{proof}

By \cref{thm:jet-identifiability}, the $p=sD$ fields in \cref{eq:Afield,eq:Kfield} are linearly independent, so \cref{lem:evaluation-basis} proves the upper bound in \cref{thm:sharp-query}. For the lower bound, restrict the parameter vector to a nonempty open box in $\R^{sD}$. Any fixed continuous scheme with $M$ scalar outputs induces a continuous injective map from that box to $\R^M$. If $M<sD$, compose with the standard coordinate embedding $\R^M\hookrightarrow\R^{sD}$. Invariance of domain \citep{munkres2000topology} would make the image open in $\R^{sD}$, although it lies in a lower-dimensional coordinate subspace, a contradiction.

%% file: appendices/D_center.tex
\section{Center Calibration and Center-Robust Identifiability}
\label{app:center}

\subsection{Why weak calibration has a physical $O(\sigma^2)$ error}
Let $c_\sigma^{\rm ref}=x_0-\sigma b_0$ with bounded $b_0$, and use normalized coordinates $y=(x-c_\sigma^{\rm ref})/\sigma$. The normalized finite-noise score can be written
\begin{equation}
u_\sigma(y)=F_0(y-b_0)+\sigma G(y-b_0)+O_{C^1}(\sigma^{1+\alpha})
\label{eq:calibration-score-expansion}
\end{equation}
on every compact set containing the test supports. For a smooth localized test $\psi$, the corresponding weak-calibration row and response are
\begin{align}
a_\psi(u)&=\left(\int \psi(y)u(y)\dd y,\;\int\psi(y)\dd y\right),
\label{eq:weak-row}\\
r_\psi(u)&=\int\left[-\nabla\psi(y)^\top u(y)+\psi(y)\{\norm{u(y)}^2+y^\top u(y)+D\}\right]\dd y.
\label{eq:weak-response}
\end{align}
Both maps are smooth polynomial functionals of $u$ on the compact test supports. Substitution of \cref{eq:calibration-score-expansion} therefore gives
\begin{equation}
A_\sigma=A_0+\sigma A_1+O(\sigma^{1+\alpha}),
\qquad
r_\sigma=r_0+\sigma r_1+O(\sigma^{1+\alpha}).
\label{eq:weak-system-expansion-app}
\end{equation}
The exact tangent identity gives $r_0=A_0\beta_0$ with $\beta_0=(b_0,1)$ because each branch is one-dimensional. If $A_0$ has full column rank, the least-squares solution is
\begin{equation}
\widehat\beta_\sigma=(A_\sigma^\top A_\sigma)^{-1}A_\sigma^\top r_\sigma.
\end{equation}
The full-rank least-squares map is smooth in a neighborhood of $(A_0,r_0)$. Differentiating its normal equations, or expanding the pseudoinverse, yields
\begin{equation}
\widehat\beta_\sigma
=\beta_0+\sigma(A_0^\top A_0)^{-1}A_0^\top(r_1-A_1\beta_0)
+O(\sigma^{1+\alpha}),
\label{eq:weak-ls-expansion-app}
\end{equation}
where the apparent derivatives of $A_\sigma^\top A_\sigma$ cancel because $r_0=A_0\beta_0$. If $b_1$ denotes the first $D$ coordinates of the order-$\sigma$ coefficient, the physical center returned by the calibration is
\begin{equation}
c_\sigma^{\rm ref}+\sigma\widehat b_\sigma
=x_0+\sigma^2b_1+O(\sigma^{2+\alpha}).
\end{equation}
This proves \cref{prop:calibration-expansion}. It also makes the role of the local-window assumption explicit: a separate coarse procedure must place the reference within $O(\sigma)$ of $x_0$.

\subsection{Two-scale center expansion}
To prove \cref{thm:center-expansion}, put
$h_\sigma=(c_\sigma-x_0)/\sigma=\sigma b+O(\sigma^{1+\alpha})$ and
$h_{\lambda\sigma}=(c_\sigma-x_0)/(\lambda\sigma)=(\sigma/\lambda)b+O(\sigma^{1+\alpha})$. Uniform $C^1$ regularity of \cref{eq:score-expansion} and a Taylor expansion give
\begin{align}
\sigma s_\sigma(c_\sigma+\sigma z)
&=F_\sigma(z+h_\sigma)
=F_0(z)+\sigma\nabla F_0(z)b+\sigma G(z)+O(\sigma^{1+\alpha}),
\label{eq:center-small}\\
\lambda\sigma s_{\lambda\sigma}(c_\sigma+\lambda\sigma z)
&=F_{\lambda\sigma}(z+h_{\lambda\sigma})
=F_0(z)+\frac{\sigma}{\lambda}\nabla F_0(z)b
+\lambda\sigma G(z)+O(\sigma^{1+\alpha}).
\label{eq:center-large}
\end{align}
Terms quadratic in $h$ and products of $h$ with the order-$\sigma$ correction are $O(\sigma^2)$, which is contained in $O(\sigma^{1+\alpha})$ for $\alpha\le1$. Subtraction proves \cref{eq:center-expansion}.

\begin{definition}[Translation lineality]
The tangent measure $\nu_0$ has translation lineality if there exists $b\ne0$ such that $(T_{tb})_\#\nu_0=\nu_0$ for every $t\in\R$. For a finite positive ray junction, this can occur only along a complete line represented by opposite rays with matching density.
\end{definition}

For injectivity, suppose the augmented field in \cref{eq:augmented-map} is zero. Since
\begin{equation*}
\nabla F_0b=\nabla(b^\trans\nabla\log Q_0),
\end{equation*}
\begin{equation}
\nabla\left(
\frac{Q_1-\lambda^{-1}b^\trans\nabla Q_0}{Q_0}
\right)=0.
\label{eq:center-scalar}
\end{equation}
Hence
$Q_1-\lambda^{-1}b^\trans\nabla Q_0=CQ_0$. Distributional differentiation commutes with convolution, so Gaussian injectivity yields
\begin{equation}
\eta-\lambda^{-1}b\cdot\nabla\nu_0=C\nu_0.
\label{eq:augmented-distribution}
\end{equation}

Use the same tube around an interior segment of ray $j$. For a test function whose value is zero on the ray and whose normal gradient $g(u)\in v_j^\perp$ is arbitrary, \cref{eq:augmented-distribution} becomes
\begin{equation}
w_j\int_I
\ip{\tfrac12u^2k_j+\lambda^{-1}P_{v_j^\perp}b}{g(u)}\dd u=0.
\label{eq:center-normal-test}
\end{equation}
Therefore
$\tfrac12u^2k_j+\lambda^{-1}P_{v_j^\perp}b=0$ for all $u\in I$, which implies
\begin{equation}
k_j=0,
\qquad
P_{v_j^\perp}b=0
\label{eq:center-normal-zero}
\end{equation}
for every $j$. Thus $b$ is parallel to every ray direction.

With the normal part removed, $b\cdot\nabla\nu_0$ is supported at the common endpoint. Indeed, for a smooth compactly supported $\phi$,
\begin{equation}
\ip{b\cdot\nabla\nu_0}{\phi}
=-\sum_jw_j\int_0^\infty(b^\trans v_j)\frac{\dd}{\dd u}\phi(uv_j)\dd u
=\left(\sum_jw_jb^\trans v_j\right)\phi(0).
\label{eq:endpoint-atom}
\end{equation}
Neither $C\nu_0$ nor the remaining density-slope distribution has an atom at zero, so the coefficient must vanish. Under \cref{eq:center-normal-zero}, this cancellation is equivalent to $b$ lying in a translation-lineality direction of the finite ray measure. The no-lineality assumption gives $b=0$. The argument in \cref{app:identifiability} then gives $a_j=C=0$. Linear independence and the query count follow exactly as in \cref{lem:evaluation-basis} and the invariance-of-domain lower bound.

%% file: appendices/E_stability_kde.tex
\section{Perturbation Composition and Finite-Sample Score Error}
\label{app:stability}

\subsection{Deterministic least-squares composition}
Let $\theta\in\R^p$ be the branch-jet vector, with the center coefficient appended in the augmented model. After selecting $M$ scalar point-coordinate evaluations, write
\begin{equation}
y=B\theta+r_{\rm fs},
\qquad
\norm{r_{\rm fs}}\le C_{\rm bias}\sigma^\alpha,
\label{eq:finite-scale-linear-model}
\end{equation}
where $B$ is the limiting design and $r_{\rm fs}$ is the finite-scale remainder from \cref{cor:scale-difference,thm:center-expansion}. The implemented design and response are $\widehat B=B+E$ and $\widehat y=y+e$. Whenever $\widehat B$ has full column rank,
\begin{align}
\widehat\theta-\theta
&=\widehat B^\dagger(\widehat y-\widehat B\theta)
=\widehat B^\dagger(r_{\rm fs}+e-E\theta),
\label{eq:pseudoinverse-composition}\\
\norm{\widehat\theta-\theta}
&\le \norm{\widehat B^\dagger}_{\op}
\left(C_{\rm bias}\sigma^\alpha+\norm e+\norm E_{\op}\norm\theta\right).
\end{align}
Weyl's inequality gives
$\sigma_{\min}(\widehat B)\ge\sigma_{\min}(B)-\norm E_{\op}$, which proves \cref{eq:ls-bound}.

Suppose the normalized score vectors at the selected queries have stacked errors $\varepsilon_\sigma$ and $\varepsilon_{\lambda\sigma}$. Scale differencing contributes at most
\begin{equation}
\frac{\varepsilon_\sigma+\varepsilon_{\lambda\sigma}}
{(\lambda-1)\sigma}.
\label{eq:scale-difference-noise}
\end{equation}
If the provisional center has an additional physical remainder $r_\sigma$, its normalized displacements are $r_\sigma/\sigma$ and $r_\sigma/(\lambda\sigma)$. A mean-value bound for the two normalized score fields, followed by division by $(\lambda-1)\sigma$, gives
\begin{equation}
C_{\rm ctr}\frac{\norm{r_\sigma}}{\sigma^2}.
\label{eq:center-remainder-bound}
\end{equation}
This proves \cref{eq:response-error}.

Finally, for a fixed finite candidate set the map from distinct directions and positive normalized weights to every design entry is smooth: the ray-Gaussian integrals and their derivatives are dominated by polynomial Gaussian envelopes, and $Q_0$ is uniformly positive. Hence on a compact first-order class with angular separation $\delta_0>0$ and minimum weight $w_0>0$,
\begin{equation}
\norm E_{\op}\le L_B\left(\max_j\norm{\widehat v_j-v_j}
+\norm{\widehat w-w}\right)
\label{eq:basis-lipschitz}
\end{equation}
after the branch correspondence is fixed. This is the explicit ``first-stage design error'' term used in the end-to-end experiment.

\subsection{A simultaneous empirical KDE score bound}
At a physical query $x$, define
\begin{equation}
W_i=e^{-\norm{x-X_i}^2/(2\sigma^2)},
\qquad
V_i=\frac{X_i-x}{\sigma}W_i,
\label{eq:WV}
\end{equation}
with means $a=\E W_i$ and $b=\E V_i$. The normalized population and empirical KDE scores are $F=b/a$ and $\widehat F=\overline V/\overline W$. The elementary envelopes
\begin{equation}
0\le W_i\le1,
\qquad
\norm{V_i}\le e^{-1/2},
\qquad
\E\norm{V_i}^2\le\frac{2}{e}a
\label{eq:WV-envelopes}
\end{equation}
follow by maximizing $re^{-r^2/2}$ and using $r^2e^{-r^2}\le(2/e)e^{-r^2/2}$.

Let $t=\log(4M/\delta)$. Scalar Bernstein for $W_i$ and a Hilbert-space Bernstein inequality for $V_i$ \citep{pinelis1994martingales}, followed by a union bound over $M$ queries, imply with probability at least $1-\delta$ that simultaneously
\begin{align}
\abs{\overline W-a}
&\le D(a):=\sqrt{\frac{2at}{N}}+\frac{t}{3N},
\label{eq:bernstein-W}\\
\norm{\overline V-b}
&\le Q(a):=\sqrt{\frac{4at}{eN}}+\frac{4e^{-1/2}t}{3N}.
\label{eq:bernstein-V}
\end{align}
Whenever $D(a)<a$, direct ratio subtraction gives
\begin{equation}
\norm{\widehat F-F}
\le\frac{Q(a)+\norm F D(a)}{a-D(a)}.
\label{eq:ratio-kde-bound}
\end{equation}
On a declared compact normalized query set for a one-dimensional branch junction, positivity and the local density lower bound give $a\ge a_-\sigma$ for all sufficiently small $\sigma$, while $\norm F$ is bounded. If
$\sqrt{2t/(Na_-\sigma)}+t/(3Na_-\sigma)\le1/2$, \cref{eq:ratio-kde-bound} simplifies to
\begin{equation}
\max_{m\le M}\norm{\widehat F_\sigma(z_m)-F_\sigma(z_m)}
\le C\left[
\sqrt{\frac{t}{N\sigma}}+\frac{t}{N\sigma}
\right].
\label{eq:normalized-kde}
\end{equation}
The same bound holds at $\lambda\sigma$ with constants depending on fixed $\lambda$.

Substituting \cref{eq:normalized-kde} into \cref{eq:scale-difference-noise}, absorbing the fixed finite design and $\lambda$ into the constant, and adding \cref{eq:finite-scale-linear-model,eq:basis-lipschitz} yields
\begin{equation}
\norm{\widehat\theta-\theta}
\le C\left[
\sigma^\alpha
+\sqrt{\frac{\log(M/\delta)}{N\sigma^3}}
+\frac{\log(M/\delta)}{N\sigma^2}
+\varepsilon_{\rm first}
\right].
\label{eq:kde-rate-app}
\end{equation}
Here $\varepsilon_{\rm first}$ collects the center remainder and the tangent direction/weight error after multiplication by their deterministic Lipschitz constants. Under sample splitting, or under an independently validated first stage whose contribution is no larger than the displayed stochastic terms, balancing $\sigma^\alpha$ and $(N\sigma^3)^{-1/2}$ gives \cref{eq:optimal-rate}. The term $N^{-1}\sigma^{-2}$ is lower order at that bandwidth for every $\alpha>0$.

\subsection{Learned-score error units}
If a learned physical score has pointwise error
$\norm{s_\theta(x,\tau)-s_\tau(x)}\le\epsilon_{\rm net}(\tau)$, then its normalized score error is $\tau\epsilon_{\rm net}(\tau)$. Consequently the two-scale response error is bounded by
\begin{equation}
\frac{\epsilon_{\rm net}(\sigma)+\lambda\epsilon_{\rm net}(\lambda\sigma)}{\lambda-1}.
\label{eq:learned-error-units}
\end{equation}
The bound shows why a small average score loss does not certify the cross-scale derivative: the errors at the two scales must also be aligned, not simply small in isolation.

%% file: appendices/F_experimental_details.tex
\section{Experimental Design and Reproduction Details}
\label{app:experiments}

\subsection{Experiment-to-claim map}

\begin{table}[H]
\centering
\small
\begin{tabular}{p{0.25\linewidth}p{0.31\linewidth}p{0.33\linewidth}}
\toprule
Question & Script and primary output & Interpretation \\
\midrule
Does the finite-scale remainder have order $\sigma^\alpha$? & \nolinkurl{run_population.py}; parameter error and refined-center error over decreasing $\sigma$ & Numerical check of \cref{thm:expansion,thm:center-expansion} alongside the analytic proof. \\
Can the sharp query theorem be implemented stably? & \nolinkurl{run_query_design.py}; $\sigma_{\min}$, condition number, error under response noise & Compares D-optimal, pivoted-QR, random, and uniform scalar rows. \\
Does sampling create the predicted bias--variance tradeoff? & \nolinkurl{run_kde.py}; score error, two-scale response error, jet error, effective sample size & Tests the conditional trend in \cref{cor:kde-rate} on the retained twenty-seed grid. \\
Does the first-order frontend compose with the jet inverse? & \nolinkurl{run_end_to_end.py}; branch count, tangent error, jet/center proxy error & Separates known-count, blind-count, and exact-first-order interfaces. \\
Does score-estimated geometry compose beyond the plane? & \nolinkurl{run_general_end_to_end.py}; center, tangent, weight, and jet errors in $D=3$--$5$ & Tests the known-count score-to-jet pipeline without supplied directions or weights. \\
Does ordinary score training preserve the two-scale derivative? & \nolinkurl{run_learned_score.py}; score RMS, scale-difference RMS, jet error & Isolates cross-scale consistency on controlled synthetic junctions. \\
How do angle and minimum weight control stability? & \nolinkurl{run_stability_phase.py}; singular values, condition number, relative error & Separates exact identifiability from uniform stability over 42 cells. \\
Does the inverse remain tractable in larger systems? & \nolinkurl{run_scalability.py}; rank, error, timing through $D=20$ and 16 branches & Tests computation and sensitivity to first-order perturbations. \\
Why match two scales instead of subtracting a tangent estimate? & \nolinkurl{run_response_baselines.py}; paired response and parameter errors & Compares the proposed response with naive subtraction and an exact-$F_0$ diagnostic. \\
Which E4 interface produces the observed error? & \nolinkurl{run_end_to_end_diagnostics.py}; crossed basis and query-row policies & Separates first-order basis error, row choice, finite-scale remainder, and KDE noise. \\
Does a larger learned score solve the inverse mismatch? & \nolinkurl{run_learned_score.py}; six hard-case jobs through 50k updates & Width-256, six-block follow-up; checkpoint conclusions remain post hoc. \\
\bottomrule
\end{tabular}
\caption{Every experiment targets a distinct proof interface.}
\label{tab:experiment-map}
\end{table}

\subsection{Geometries and exact population evaluation}
Each synthetic branch is a constant-curvature arc
\begin{equation}
\gamma_j(r)-x_0=
\frac{\sin(\kappa_jr)}{\kappa_j}v_j+
\frac{1-\cos(\kappa_jr)}{\kappa_j}n_j,
\label{eq:constant-curvature-arc}
\end{equation}
with the continuous straight-ray limit at $\kappa_j=0$. This curve is exactly arc-length parameterized, satisfies $\gamma_j''(0)=\kappa_jn_j$, and has nonzero higher-order terms. The density is
\begin{equation}
\rho_j(r)=w_j\exp(a_jr+d_jr^2),
\label{eq:experiment-density}
\end{equation}
where the quadratic coefficient keeps the finite-radius sampling distribution well behaved without changing the target slope $a_j$. The normalized population score is evaluated as a Gaussian-weighted first-moment/zeroth-moment ratio using fixed Gauss--Legendre quadrature. It is never obtained by numerical differentiation.

The analytic design uses the posterior moment identities
\begin{equation}
F_0(z)=\E_{\Pi_z}[U]-z,
\qquad
\nabla F_0(z)=\Cov_{\Pi_z}(U)-I_D,
\label{eq:tangent-posterior-identities}
\end{equation}
and corresponding differentiated ray moments for the density and curvature columns. The implemented design uses this column order and sign convention, the center coefficient $-1/\lambda$, exact planar tangent reconstruction, and generic-rank constructions in $D\in\{2,3,5\}$.

\subsection{Population and finite-query matrices}
The population matrix crosses ambient dimension, branch count, random seed, scale ratio, true versus $O(\sigma^2)$-biased center, and decreasing base scale. Directions are generated with a minimum angular separation, weights are positive and normalized, and curvature/density-slope coordinates are independently sampled inside declared bounds. The fitted exponent uses only the smallest configured scales.

For query design, a candidate normalized grid is built independently of the unknown second-order coefficients. Every point contributes $D$ possible scalar component rows. We compare:
\begin{enumerate}
\item \emph{Greedy D-optimal}: sequentially maximize $b^\top A^{-1}b$ with a small ridge;
\item \emph{Pivoted QR}: select a rank-revealing minimal basis, then add high-norm rows;
\item \emph{Random}: uniform sampling without replacement;
\item \emph{Uniform}: equally spaced row indices, retained as a deliberately geometry-agnostic baseline.
\end{enumerate}
The response perturbation has a configured RMS and is regenerated from a recorded seed. The code records scalar-query count, number of unique physical locations, rank, $\sigma_{\min}$, condition number, and relative parameter error.

\subsection{Stability phase, scale-up, and response baselines}
The stability phase experiment uses three planar rays with configured minimum
angles and weights drawn from
\begin{equation*}
\Delta_{\min}\in\{3,5,8,12,20,30,45\}^{\circ},
\qquad
w_{\min}\in\{0.01,0.02,0.04,0.08,0.12,0.20\}.
\end{equation*}
Each of the 42 cells has 20 seeds, response-noise RMS $10^{-3}$, and twice as many scalar queries as unknowns. We retain the entire grid, including the high-error near-collision cells. The full grid characterizes the transition across angle and weight without imposing a universal separation threshold.

The scale-up crosses $(D,s)\in\{(2,4),(3,8),(5,8),(10,16),(20,16)\}$, ten seeds, and four first-order perturbation levels. Direction/weight RMS pairs are $(0,0)$, $(0.01,0.005)$, $(0.03,0.015)$, and $(0.06,0.03)$. Every case uses $\sigma=0.04$, $\lambda=2$, and twice the parameter count in scalar queries. Timing separates population-response evaluation, row design, and least-squares solution.

The response baseline uses four planar geometries, ten seeds, four base scales, population and KDE scores, and the same four perturbation levels. Three methods share fixed exact-geometry query rows: the matched two-scale response, naive subtraction of a tangent field, and an exact-$F_0$ diagnostic. The exact-$F_0$ diagnostic is an idealized reference that isolates the cost of estimating the nuisance field. Method ratios are paired before aggregation.

\subsection{KDE and estimated frontends}
A branch is selected according to its finite-radius mass and its radial coordinate is sampled by an inverse CDF. Empirical Gaussian KDE scores are evaluated exactly from the stored samples, with a per-query log-weight shift for numerical stability. The effective sample size
\begin{equation}
\ess(x)=\frac{(\sum_i W_i(x))^2}{\sum_i W_i(x)^2}
\end{equation}
is recorded at both scales. One sampled dataset is reused across the bandwidth sweep for a paired comparison.

The planar first-stage frontend queries a tangential score on an equally spaced shell, spectrally integrates the circular log density, divides by the positive Gaussian--ray Fourier multipliers, infers Toeplitz rank, recovers directions through a matrix pencil, and fits nonnegative weights. The end-to-end experiment reports known-count and blind-count variants separately. The second-order error is evaluated only when the recovered count matches, and a separate exact-first-order row isolates the second stage.

The nonplanar frontend treats the branch count as known and fits the leading tangent-ray score directly. Its variables are a normalized center offset, hyperspherical branch directions, and softmax-normalized weights. Multistart nonlinear least squares uses shell and interior score queries; the fitted center and first-order geometry then determine the second-stage query rows. Ground-truth directions and weights enter only the final permutation alignment and error calculation. The grid crosses $D\in\{3,4,5\}$, branch counts $\{3,4,6\}$, five seeds, population and KDE scores, and base scales $\{0.080,0.055,0.038\}$. KDE rows use \GeneralEtoESampleSize{} curve samples. Every case also includes an exact-first-order diagnostic evaluated at the same fitted center.

The crossed E4 diagnostic uses the same four geometries, five seeds, four scales, known/blind counts, population/KDE scores, and 32,768 samples. For every successful first stage, it independently crosses an estimated or exact first-order basis with rows selected from estimated or exact geometry. This separates coefficient-basis error from row-selection error. A failed blind-count case produces one explicit sentinel instead of four inverse rows; count accuracy includes that sentinel, whereas error summaries contain only successful inversions. Median relative error changes by a factor of \DiagnosticQueryRowRatio{} when the row policy is switched. In population scores, the estimated-basis/exact-basis ratio \DiagnosticPopulationEstimatedExactBasisRatio{} reflects finite-scale bias cancellation: the estimated tangent basis can absorb part of the remainder, so the comparison diagnoses bias absorption rather than geometric accuracy.

\subsection{Learned-score protocol and hardware scaling}
The network predicts the normalized score $\sigma s_\sigma$ from a normalized noisy location and a bounded-frequency sinusoidal encoding of $\log\sigma$. Clean samples are restricted to radius $1.25$ around the junction, and the noise scale is drawn log-uniformly across the configured interval. Training samples
\begin{equation}
\widetilde X=X/\sigma+Z,
\qquad Z\sim\mathcal N(0,I_D),
\end{equation}
then minimizes $\frac12\E\|f_\theta(\widetilde X,\sigma)+Z\|^2$. The residual MLP has width 128, four blocks, eight sinusoidal frequencies up to 8, and \LearnedModelParameters{} trainable parameters. Each job starts from 32,768 physical samples, of which \LearnedTrainingSampleMin{}--\LearnedTrainingSampleMax{} fall within the radius cutoff. AdamW uses batch size 1,024, initial learning rate $1.5\times10^{-3}$, weight decay $10^{-6}$, and gradient-norm clipping at 10. A cosine schedule decays to 5\% of the initial rate over 20,000 updates. Evaluation uses an exponential moving average with decay $0.999$ after a 1,000-update warmup at checkpoints $\{1000,2500,5000,7500,10000,15000,20000\}$.

All geometries, initializations, sample draws, minibatches, and noise draws use recorded integer seeds. The base configuration has one independent job per geometry--seed pair. The launcher assigns the \LearnedJobCount{} jobs round-robin across two NVIDIA GeForce RTX 3090 GPUs; each process sees one device as \texttt{cuda:0}. Median job time is \LearnedMedianJobSeconds{} seconds, aggregate GPU time is \LearnedGPUJobHours{} hours, and dual-GPU wall time is \LearnedElapsedSeconds{} seconds.

The scale-up retains the two harder four-ray geometries and three seeds, increases the model to width \LearnedScaleupWidth{} with \LearnedScaleupBlocks{} residual blocks, uses 65,536 physical samples, and trains through \LearnedScaleupSteps{} updates. The seven checkpoints are $\{1,5,10,20,30,40,50\}$k. The \LearnedScaleupJobs{} jobs were run concurrently on one RTX 3090; median per-job wall time was \LearnedScaleupMedianJobSeconds{} seconds. Independent population and empirical score evaluations diagnose the same controlled synthetic distribution. Jet-error checkpoint selection is reported post hoc; the nonmonotone parameter trajectory is the mechanism under study.

\subsection{Support Material}
The anonymous package contains \nolinkurl{analysis/configs/paper.yaml}, ten experiment entry points, and the shared numerical routines they import. The entry points cover the eleven studies above because \nolinkurl{run_learned_score.py} accepts both the base and scale-up configurations. Each script writes only CSV output to an ignored \nolinkurl{results/} directory. The package contains neither generated results nor code for manuscript figures or tables. Exact commands and the mapping from scripts to experiments appear in \nolinkurl{analysis/README.md}.

\subsection{Generated tables and additional figures}
The following tables report aggregates from the experiment CSVs and are included here for completeness.

\begin{table}[h]
\centering\small
\input{tables/population_summary.tex}
\caption{Population finite-scale fits from the active result profile.}
\end{table}

\begin{table}[h]
\centering\small
\input{tables/query_design_summary.tex}
\caption{Query-design summary at approximately twice the parameter dimension.}
\end{table}

\begin{table}[h]
\centering\small
\input{tables/kde_summary.tex}
\caption{Best bandwidth and error slopes on the retained twenty-seed grid.}
\end{table}

\begin{table}[h]
\centering\small
\input{tables/end_to_end_summary.tex}
\caption{Composition of the planar tangent frontend and the second-order inverse.}
\end{table}

\begin{table}[h]
\centering\small
\input{tables/general_end_to_end_summary.tex}
\caption{Known-count nonplanar composition from $D=3$ to $D=5$. Entries are medians over 45 cases per source and dimension; angle errors are in radians, jet errors are relative, and every displayed design is full rank.}
\label{tab:general-end-to-end}
\end{table}

\begin{table}[h]
\centering\small
\input{tables/learned_summary.tex}
\caption{Final-checkpoint learned-score diagnostics.}
\end{table}

\begin{table}[h]
\centering\small
\input{tables/stability_summary.tex}
\caption{Opposite corners of the angle--minimum-weight stability phase map.}
\end{table}

\begin{table}[h]
\centering\small
\input{tables/response_baseline_summary.tex}
\caption{Paired naive-subtraction error divided by matched-response error.}
\end{table}

\begin{table}[h]
\centering\scriptsize
\input{tables/scalability_summary.tex}
\caption{Scale-up errors and timings. Every displayed cell has ten seeds and full column rank.}
\end{table}

\begin{table}[h]
\centering\scriptsize
\resizebox{\linewidth}{!}{\input{tables/end_to_end_diagnostics_summary.tex}}
\caption{Crossed E4 diagnostic on successful inverse rows. ``Basis'' and ``Rows'' vary independently.}
\end{table}

\begin{table}[h]
\centering\small
\input{tables/learned_scaleup_summary.tex}
\caption{Width-256, six-block hard-case learned-score follow-up. Values are medians over six jobs.}
\end{table}

\begin{figure}[h]
\centering
\begin{subfigure}[t]{0.38\linewidth}
  \centering
  \includegraphics[width=\linewidth]{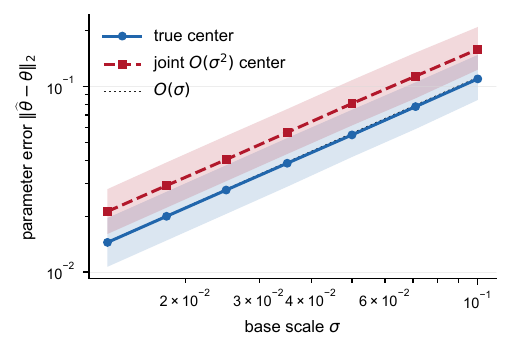}
  \caption{Predicted population order.}
\end{subfigure}\hfill
\begin{subfigure}[t]{0.60\linewidth}
  \centering
  \includegraphics[width=\linewidth]{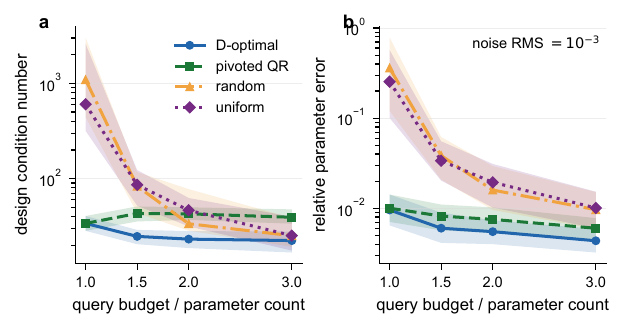}
  \caption{Conditioning controls recovery.}
\end{subfigure}
\caption{Two theorem-facing checks. Lines show medians and bands show interquartile ranges. Left: exact- and joint-center errors follow the $O(\sigma)$ reference. Right: additional, well-selected scalar queries stabilize noisy recovery.}
\label{fig:paper-main}
\end{figure}

\begin{figure}[h]
\centering
\includegraphics[width=0.94\linewidth]{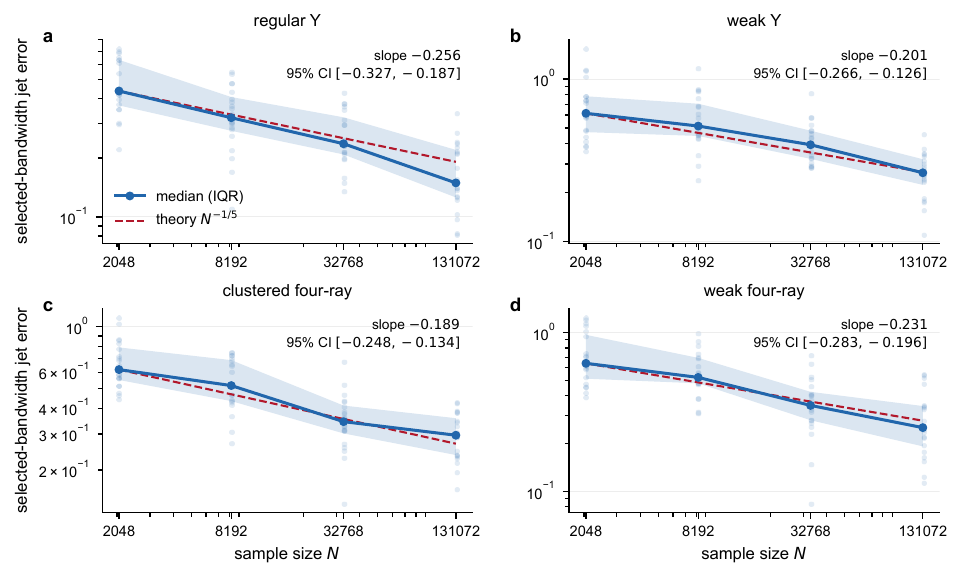}
\caption{KDE finite-sample rate by geometry. Faint points are individual seeds at the median-selected bandwidth; solid curves and bands show the seed median and interquartile range. Red references have slope $-1/5$. Insets report 5,000-resample seed-bootstrap percentile intervals.}
\label{fig:kde-grid}
\end{figure}

\begin{figure}[h]
\centering
\includegraphics[width=0.88\linewidth]{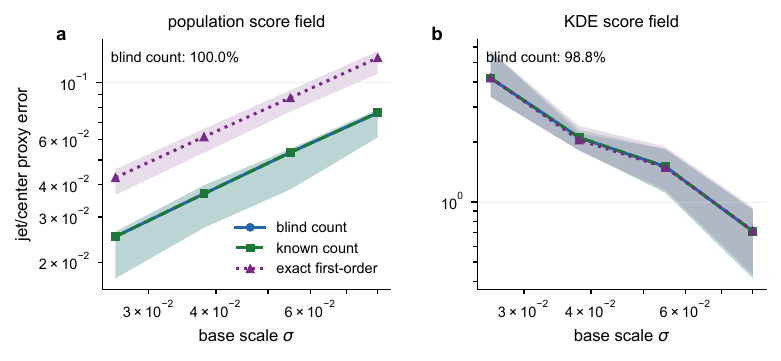}
\caption{First-order frontend followed by second-order recovery. Bands are interquartile ranges over correct-count rows. Population error decreases with scale; at fixed sample size, KDE variance reverses the trend. Blind-count accuracy is computed on all rows.}
\label{fig:end-to-end-grid}
\end{figure}

\begin{figure}[h]
\centering
\includegraphics[width=0.98\linewidth]{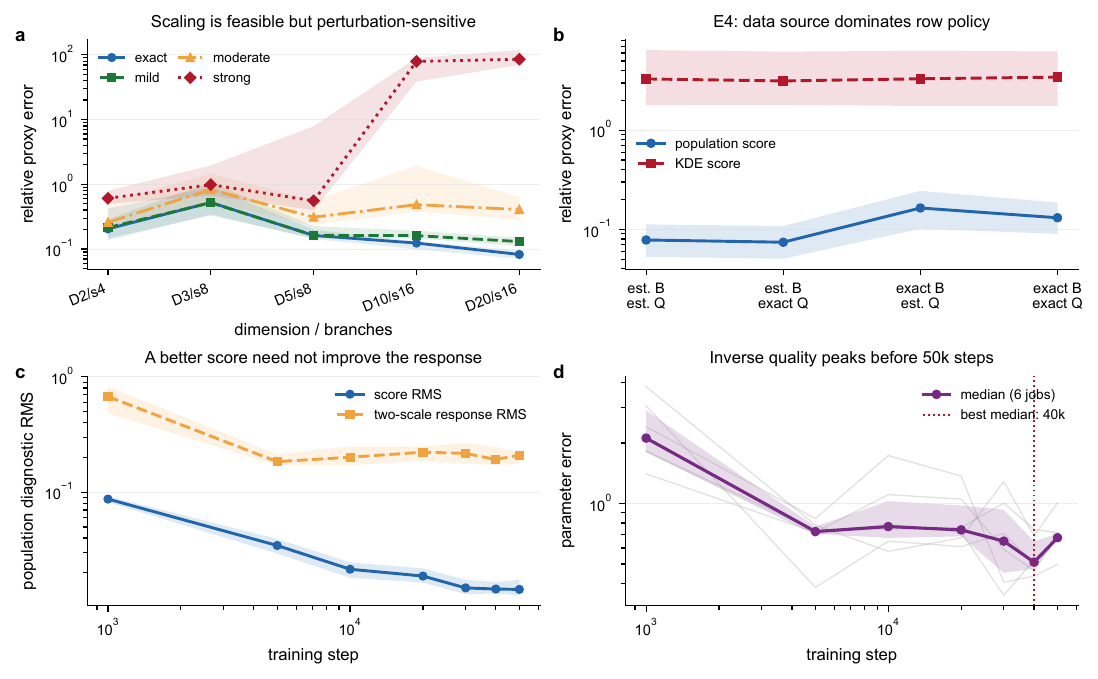}
\caption{Supplemental diagnostics. (a) Dimension/branch scale-up under exact through strong first-order perturbations. (b) E4 with first-order basis (B) and query-row geometry (Q) crossed. (c--d) Width-256 learned-score follow-up through 50k updates; bands are interquartile ranges, and thin curves in (d) are all six jobs.}
\label{fig:revision-diagnostics}
\end{figure}

\begin{figure}[h]
\centering
\includegraphics[width=0.72\linewidth]{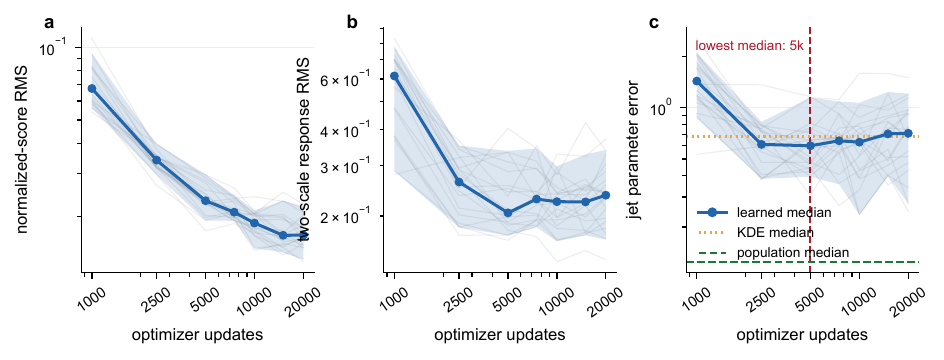}
\caption{Learned-score diagnostics across seven checkpoints. Thin curves show all \LearnedJobCount{} jobs; thick curves and bands show the median and 5th--95th percentiles. Median jet error reaches its minimum at \LearnedBestMedianStep{} updates before score RMS reaches its lowest value; dotted and dashed lines are matched KDE and population baselines.}
\label{fig:learned-grid}
\end{figure}

%% file: tables/population_summary.tex
\begin{tabular}{lccccc}
\toprule
$D$ & center & median slope & IQR & median $R^2$ & instances \\
\midrule
2 & biased & 1.005 & [0.993, 1.021] & 1.0000 & 30 \\
2 & true & 0.992 & [0.978, 1.003] & 1.0000 & 30 \\
3 & biased & 1.003 & [0.989, 1.009] & 1.0000 & 30 \\
3 & true & 0.999 & [0.992, 1.010] & 1.0000 & 30 \\
5 & biased & 0.996 & [0.988, 1.001] & 1.0000 & 30 \\
5 & true & 0.999 & [0.992, 1.002] & 1.0000 & 30 \\
\bottomrule
\end{tabular}

%% file: tables/query_design_summary.tex
\begin{tabular}{lcccc}
\toprule
design & median $\sigma_{\min}$ & median cond. & median relative error & 95th pct. \\
\midrule
D-optimal & 0.468 & 23.2 & 0.00552 & 0.011 \\
pivoted QR & 0.321 & 42.7 & 0.00754 & 0.0151 \\
random & 0.138 & 33.6 & 0.016 & 0.0629 \\
uniform & 0.109 & 46.6 & 0.0195 & 0.0714 \\
\bottomrule
\end{tabular}

%% file: tables/kde_summary.tex
\begin{tabular}{lcccc}
\toprule
geometry & error slope & seed-bootstrap 95\% CI & theory & bandwidth slope \\
\midrule
clustered four-ray & -0.189 & [-0.248, -0.134] & -0.200 & -0.184 \\
regular Y & -0.256 & [-0.327, -0.187] & -0.200 & -0.163 \\
weak four-ray & -0.231 & [-0.283, -0.196] & -0.200 & -0.200 \\
weak Y & -0.201 & [-0.266, -0.126] & -0.200 & -0.159 \\
\bottomrule
\end{tabular}

%% file: tables/end_to_end_summary.tex
\begin{tabular}{lccccc}
\toprule
field & tangent input & count rate & median jet error & 95th pct. & median angle \\
\midrule
kde & blind count & 98.8\% & 1.77 & 5.78 & 0.0216 \\
kde & known count & 100.0\% & 1.79 & 5.76 & 0.0214 \\
kde & exact first-order & 100.0\% & 1.8 & 5.77 & 0 \\
population & blind count & 100.0\% & 0.0396 & 0.0892 & 0.0177 \\
population & known count & 100.0\% & 0.0396 & 0.0892 & 0.0177 \\
population & exact first-order & 100.0\% & 0.0672 & 0.15 & 0 \\
\bottomrule
\end{tabular}

%% file: tables/general_end_to_end_summary.tex
\begin{tabular}{@{}ccccc@{}}
\toprule
$D$ & Pop. angle & Pop. jet error & KDE angle & KDE jet error \\
\midrule
3 & 0.0212 & 0.142 & 0.0235 & 0.975 \\
4 & 0.0167 & 0.124 & 0.0206 & 0.967 \\
5 & 0.0209 & 0.125 & 0.0219 & 0.927 \\
\bottomrule
\end{tabular}

%% file: tables/learned_summary.tex
\begin{tabular}{lccccc}
\toprule
geometry & seeds & score RMS & scale-diff. RMS & jet error & KDE jet error \\
\midrule
clustered four-ray & 5 & 0.0149 & 0.231 & 0.972 & 0.861 \\
regular Y & 5 & 0.0164 & 0.215 & 0.34 & 0.277 \\
weak four-ray & 5 & 0.0169 & 0.238 & 0.746 & 0.599 \\
weak Y & 5 & 0.0203 & 0.303 & 0.761 & 0.677 \\
\bottomrule
\end{tabular}

%% file: tables/stability_summary.tex
\begin{tabular}{lccccc}
\toprule
Regime & Angle & Min. weight & Median $\sigma_{\min}$ & Median cond. & Median rel. error \\
\midrule
near collision & 3 & 0.01 & 0.0018 & 5.61e+04 & 0.698 \\
well separated & 45 & 0.2 & 0.553 & 24.6 & 0.00379 \\
\bottomrule
\end{tabular}

%% file: tables/response_baseline_summary.tex
\begin{tabular}{lcccc}
\toprule
Score source & Perturbation & Median naive/matched & IQR & $n$ \\
\midrule
population & exact & 0.334 & [0.332, 0.337] & 160 \\
population & mild & 7.75 & [3.5, 15.7] & 160 \\
population & moderate & 26.6 & [10.2, 52.4] & 160 \\
population & strong & 49.4 & [21.4, 115] & 160 \\
KDE & exact & 0.863 & [0.652, 1.24] & 160 \\
KDE & mild & 0.893 & [0.676, 1.32] & 160 \\
KDE & moderate & 1.21 & [0.916, 1.78] & 160 \\
KDE & strong & 1.96 & [1.32, 3.09] & 160 \\
\bottomrule
\end{tabular}

%% file: tables/scalability_summary.tex
\begin{tabular}{lcccc}
\toprule
Case & Perturbation & Median rel. error & Median cond. & Response time (s) \\
\midrule
D2/s4 & exact & 0.205 & 40.2 & 0.011 \\
D2/s4 & mild & 0.215 & 40.1 & 0.011 \\
D2/s4 & moderate & 0.259 & 38.2 & 0.011 \\
D2/s4 & strong & 0.609 & 38.5 & 0.011 \\
D3/s8 & exact & 0.524 & 169 & 0.022 \\
D3/s8 & mild & 0.52 & 143 & 0.022 \\
D3/s8 & moderate & 0.831 & 174 & 0.022 \\
D3/s8 & strong & 0.989 & 307 & 0.022 \\
D5/s8 & exact & 0.162 & 44 & 0.030 \\
D5/s8 & mild & 0.163 & 49.7 & 0.030 \\
D5/s8 & moderate & 0.311 & 60.3 & 0.030 \\
D5/s8 & strong & 0.556 & 117 & 0.030 \\
D10/s16 & exact & 0.124 & 99.8 & 0.096 \\
D10/s16 & mild & 0.161 & 104 & 0.096 \\
D10/s16 & moderate & 0.485 & 175 & 0.096 \\
D10/s16 & strong & 79.2 & 8.55e+03 & 0.096 \\
D20/s16 & exact & 0.0825 & 128 & 0.121 \\
D20/s16 & mild & 0.131 & 137 & 0.121 \\
D20/s16 & moderate & 0.408 & 162 & 0.121 \\
D20/s16 & strong & 85.6 & 2.43e+04 & 0.121 \\
\bottomrule
\end{tabular}

%% file: tables/end_to_end_diagnostics_summary.tex
\begin{tabular}{lcccccc}
\toprule
Source & Basis & Rows & Projected error & Relative proxy error & Remainder & Cond. \\
\midrule
population & estimated & estimated & 0.039 & 0.0781 & 0.0219 & 20.7 \\
population & estimated & exact & 0.0376 & 0.0741 & 0.0219 & 20.3 \\
population & exact & estimated & 0.0785 & 0.164 & 0.0387 & 20.8 \\
population & exact & exact & 0.0675 & 0.131 & 0.0375 & 20.4 \\
kde & estimated & estimated & 1.68 & 3.29 & 0.852 & 19.8 \\
kde & estimated & exact & 1.67 & 3.14 & 0.844 & 19.7 \\
kde & exact & estimated & 1.63 & 3.3 & 0.851 & 19.5 \\
kde & exact & exact & 1.66 & 3.44 & 0.833 & 19.5 \\
\bottomrule
\end{tabular}

%% file: tables/learned_scaleup_summary.tex
\begin{tabular}{lcccc}
\toprule
Step & Score RMS & Response RMS & Parameter error & Parameter-error IQR \\
\midrule
1k & 0.0872 & 0.675 & 2.11 & [1.81, 2.9] \\
5k & 0.0345 & 0.184 & 0.725 & [0.705, 0.768] \\
10k & 0.0215 & 0.201 & 0.768 & [0.674, 1.03] \\
20k & 0.0187 & 0.222 & 0.739 & [0.686, 0.977] \\
30k & 0.0148 & 0.217 & 0.65 & [0.452, 0.931] \\
40k & 0.0145 & 0.192 & 0.511 & [0.473, 0.648] \\
50k & 0.0143 & 0.21 & 0.676 & [0.665, 0.708] \\
\bottomrule
\end{tabular}

%% file: refs.bib
@misc{weightedtangent2026,
  title        = {Recovering Weighted Tangent Geometry from a Single-Scale Score Field},
  author       = {Zhao, Ziqi and Ni, Qingjian},
  year         = {2026},
  howpublished = {arXiv:2608.22334}
}

@misc{brosse2026boundary,
  title        = {Boundary-Layer Asymptotics for Gaussian-Smoothed Singular Measures},
  author       = {Brosse, Nicolas and Dalalyan, Arnak S.},
  year         = {2026},
  howpublished = {arXiv:2607.04514}
}

@article{hyvarinen2005score,
  title   = {Estimation of Non-Normalized Statistical Models by Score Matching},
  author  = {Hyv{\"a}rinen, Aapo},
  journal = {Journal of Machine Learning Research},
  volume  = {6},
  number  = {24},
  pages   = {695--709},
  year    = {2005}
}

@inproceedings{song2019generative,
  title     = {Generative Modeling by Estimating Gradients of the Data Distribution},
  author    = {Song, Yang and Ermon, Stefano},
  booktitle = {Advances in Neural Information Processing Systems},
  volume    = {32},
  year      = {2019}
}

@inproceedings{song2021score,
  title     = {Score-Based Generative Modeling through Stochastic Differential Equations},
  author    = {Song, Yang and Sohl-Dickstein, Jascha and Kingma, Diederik P. and Kumar, Abhishek and Ermon, Stefano and Poole, Ben},
  booktitle = {International Conference on Learning Representations},
  year      = {2021}
}

@article{vincent2011connection,
  title   = {A Connection Between Score Matching and Denoising Autoencoders},
  author  = {Vincent, Pascal},
  journal = {Neural Computation},
  volume  = {23},
  number  = {7},
  pages   = {1661--1674},
  year    = {2011},
  doi     = {10.1162/NECO_a_00142}
}

@article{alain2014regularized,
  title   = {What Regularized Auto-Encoders Learn from the Data-Generating Distribution},
  author  = {Alain, Guillaume and Bengio, Yoshua},
  journal = {Journal of Machine Learning Research},
  volume  = {15},
  number  = {110},
  pages   = {3743--3773},
  year    = {2014}
}

@inproceedings{stanczuk2024dimension,
  title     = {Diffusion Models Encode the Intrinsic Dimension of Data Manifolds},
  author    = {Stanczuk, Jan Pawel and Batzolis, Georgios and Deveney, Teo and Sch{\"o}nlieb, Carola-Bibiane},
  booktitle = {International Conference on Machine Learning},
  year      = {2024}
}

@inproceedings{ventura2025manifolds,
  title     = {Manifolds, Random Matrices and Spectral Gaps: The Geometric Phases of Generative Diffusion},
  author    = {Ventura, Enrico and Achilli, Beatrice and Silvestri, Gianluigi and Lucibello, Carlo and Ambrogioni, Luca},
  booktitle = {International Conference on Learning Representations},
  year      = {2025}
}

@inproceedings{diepeveen2025pullback,
  title     = {Score-Based Pullback Riemannian Geometry: Extracting the Data Manifold Geometry Using Anisotropic Flows},
  author    = {Diepeveen, Willem and Batzolis, Georgios and Shumaylov, Zakhar and Sch{\"o}nlieb, Carola-Bibiane},
  booktitle = {International Conference on Machine Learning},
  year      = {2025}
}

@inproceedings{kharitenko2026landing,
  title     = {Landing with the Score: Riemannian Optimization through Denoising},
  author    = {Kharitenko, Andrey and Shen, Zebang and De Santi, Riccardo and He, Niao and D{\"o}rfler, Florian},
  booktitle = {International Conference on Learning Representations},
  year      = {2026}
}

@inproceedings{li2026rates,
  title     = {When Scores Learn Geometry: Rate Separations under the Manifold Hypothesis},
  author    = {Li, Xiang and Shen, Zebang and Hsieh, Ya-Ping and He, Niao},
  booktitle = {International Conference on Learning Representations},
  year      = {2026}
}

@misc{zhang2026manifold,
  title        = {Diffusion Model for Manifold Data: Score Decomposition, Curvature, and Statistical Complexity},
  author       = {Zhang, Zixuan and Huang, Kaixuan and Zhao, Tuo and Wang, Mengdi and Chen, Minshuo},
  year         = {2026},
  howpublished = {arXiv:2603.20645}
}

@misc{rawal2026rao,
  title        = {Rao-Blackwellized Score Matching on Manifolds},
  author       = {Rawal, Divit},
  year         = {2026},
  howpublished = {arXiv:2605.25567}
}

@article{federer1959curvature,
  title   = {Curvature Measures},
  author  = {Federer, Herbert},
  journal = {Transactions of the American Mathematical Society},
  volume  = {93},
  number  = {3},
  pages   = {418--491},
  year    = {1959},
  doi     = {10.1090/S0002-9947-1959-0110078-1}
}

@article{preiss1987geometry,
  title   = {Geometry of Measures in $\mathbb{R}^n$: Distribution, Rectifiability, and Densities},
  author  = {Preiss, David},
  journal = {Annals of Mathematics},
  volume  = {125},
  number  = {3},
  pages   = {537--643},
  year    = {1987}
}

@article{niyogi2008homology,
  title   = {Finding the Homology of Submanifolds with High Confidence from Random Samples},
  author  = {Niyogi, Partha and Smale, Stephen and Weinberger, Shmuel},
  journal = {Discrete \& Computational Geometry},
  volume  = {39},
  number  = {1--3},
  pages   = {419--441},
  year    = {2008},
  doi     = {10.1007/s00454-008-9053-2}
}

@article{ozertem2011principal,
  title   = {Locally Defined Principal Curves and Surfaces},
  author  = {Ozertem, Umut and Erdogmus, Deniz},
  journal = {Journal of Machine Learning Research},
  volume  = {12},
  number  = {34},
  pages   = {1249--1286},
  year    = {2011}
}

@article{genovese2012minimax,
  title   = {Minimax Manifold Estimation},
  author  = {Genovese, Christopher R. and Perone-Pacifico, Marco and Verdinelli, Isabella and Wasserman, Larry},
  journal = {Journal of Machine Learning Research},
  volume  = {13},
  number  = {43},
  pages   = {1263--1291},
  year    = {2012}
}

@article{aamari2019nonasymptotic,
  title   = {Nonasymptotic Rates for Manifold, Tangent Space and Curvature Estimation},
  author  = {Aamari, Eddie and Levrard, Cl{\'e}ment},
  journal = {The Annals of Statistics},
  volume  = {47},
  number  = {1},
  pages   = {177--204},
  year    = {2019},
  doi     = {10.1214/18-AOS1685}
}

@book{munkres2000topology,
  title     = {Topology},
  author    = {Munkres, James R.},
  edition   = {2},
  publisher = {Prentice Hall},
  year      = {2000}
}

@article{pinelis1994martingales,
  title   = {Optimum Bounds for the Distributions of Martingales in Banach Spaces},
  author  = {Pinelis, Iosif},
  journal = {The Annals of Probability},
  volume  = {22},
  number  = {4},
  pages   = {1679--1706},
  year    = {1994}
}

@article{kunis2016prony,
  title   = {A Multivariate Generalization of Prony's Method},
  author  = {Kunis, Stefan and Peter, Thomas and R{\"o}mer, Tim and von der Ohe, Ulrich},
  journal = {Linear Algebra and its Applications},
  volume  = {490},
  pages   = {31--47},
  year    = {2016}
}

@article{kunis2019sphere,
  title   = {Prony's Method on the Sphere},
  author  = {Kunis, Stefan and M{\"o}ller, H. Michael and von der Ohe, Ulrich},
  journal = {SMAI Journal of Computational Mathematics},
  volume  = {S5},
  pages   = {87--97},
  year    = {2019}
}

@article{candes2014superresolution,
  title   = {Towards a Mathematical Theory of Super-Resolution},
  author  = {Cand{\`e}s, Emmanuel J. and Fernandez-Granda, Carlos},
  journal = {Communications on Pure and Applied Mathematics},
  volume  = {67},
  number  = {6},
  pages   = {906--956},
  year    = {2014}
}
